\documentclass{article}

\PassOptionsToPackage{numbers, sort&compress}{natbib}

\usepackage[preprint,main]{neurips_2026}

\usepackage[utf8]{inputenc}
\usepackage[T1]{fontenc}
\usepackage{microtype}
\usepackage[table]{xcolor} 
\usepackage{bbm}
\usepackage{enumitem}
\usepackage{booktabs}

\usepackage{amsmath, amssymb, amsthm, mathtools}
\usepackage{cancel}

\theoremstyle{plain}
\newtheorem{theorem}{Theorem}[section]
\newtheorem{proposition}[theorem]{Proposition}
\newtheorem{lemma}[theorem]{Lemma}
\newtheorem{corollary}[theorem]{Corollary}

\theoremstyle{definition}

\theoremstyle{remark}

\usepackage{graphicx}
\usepackage{booktabs}
\usepackage{subcaption}
\usepackage{wrapfig}
\usepackage{multirow}
\usepackage{booktabs}
\usepackage{longtable}
\usepackage{algorithm}
\usepackage{algorithmic}

\definecolor{codeblue}{rgb}{0.8,0.2,0.2}
\definecolor{codegray}{rgb}{0.5,0.5,0.5}
\definecolor{stdgray}{gray}{0.45}
\definecolor{best}{RGB}{0,0,0}

\definecolor{AccentTeal}{HTML}{2B6E73}
\definecolor{BgTeal}{HTML}{F3F7F7}
\definecolor{StrongBlue}{HTML}{1F4E79}
\definecolor{StrongBg}{HTML}{EEF4FA}

\usepackage{listings}
\definecolor{aquamarineblue}{HTML}{71D9E2}
\definecolor{darkgreen}{RGB}{0,80,0}
\PassOptionsToPackage{
    colorlinks=true,
    citecolor=aquamarineblue,
    linkcolor=red!70!orange,
    urlcolor=red!70!orange,
}{hyperref}
\definecolor{bestorange}{HTML}{E69F00}

\newcommand{\std}[1]{\textcolor{gray}{\scriptstyle #1}}

\usepackage[most]{tcolorbox}

\newtcolorbox{callout}[1][]{
    breakable, enhanced, colback=BgTeal, colframe=AccentTeal,
    boxrule=0.5pt, arc=2mm, left=1.5mm, right=1.5mm, top=1mm, bottom=1mm, #1
}

\newtcolorbox{calloutimportant}[1][]{
    breakable, enhanced, colback=StrongBg, colframe=StrongBlue,
    boxrule=0.6pt, arc=2mm, left=1.8mm, right=1.5mm, top=1mm, bottom=1mm,
    borderline west={2.2pt}{0pt}{StrongBlue}, #1
}

\newtcolorbox{calloutmotivation}[1][]{
    breakable, enhanced, colback=StrongBg, colframe=StrongBlue,
    boxrule=0.6pt, arc=2mm, left=1.8mm, right=1.5mm, top=1mm, bottom=1mm,
    borderline west={2.2pt}{0pt}{StrongBlue}, fonttitle=\bfseries, #1
}

\usepackage{hyperref} 
\hypersetup{pdftitle={My NeurIPS Paper}, pdfauthor={Author Name}}

\usepackage[capitalize,noabbrev]{cleveref}
\usepackage{makecell}

\title{Convex Compositional Reasoning Models}

\author{
    Meir Roketlishvili$^{1}$ \\
    \texttt{Meir.Roketlishvili@mbzuai.ac.ae}
    \And
    Semyon Semenov$^{1}$
    \And
    Maksim Bobrin$^{2, 3}$
    \And
    Viktor Kovalchuk$^{1}$
    \And
    Albert Baichorov$^{1}$
    \And
    Abduragim Shtanchaev$^{1}$
    \And
    Fakhri Karray$^{1}$
    \And
    Dmitry V. Dylov$^{2,3}$
    \And
    Martin Takáč$^{1}$
    \And
    Arip Asadulaev$^{1}$\\[2mm]
    \normalfont
        $^{1}$ Mohamed bin Zayed University of Artificial Intelligence, Abu Dhabi, UAE \\
        $^{2}$ Applied AI Institute, Computational Imaging Lab\\
        $^{3}$ AXXX
}

\begin{document}
\maketitle
\begin{abstract}

Compositional energy-based models can generalize to larger combinatorial reasoning problems by reusing a learned factor energy across many local constraints. In our paper, we show that a key bottleneck in compositional reasoning is not composition itself, but the non-convex geometry of the learned energy landscape. To solve this problem, we introduce Convex Compositional Energy Minimization (CCEM), a framework that parameterizes each factor with an input-convex neural network and optimizes the composed energy over a tight convex relaxation of the feasible set. Because convexity is preserved under summation, the global relaxed objective remains convex, enabling deterministic projected first-order optimization. CCEM is trained in two stages: factor-level contrastive learning to shape local energy basins, followed by end-to-end refinement through an unrolled projected solver. Our experiments show that our models trained on small subproblems or a single problem size transfer to larger instances without retraining.
\end{abstract}
\begin{figure}[h!]
    \centering
    \includegraphics[width=0.9\linewidth]{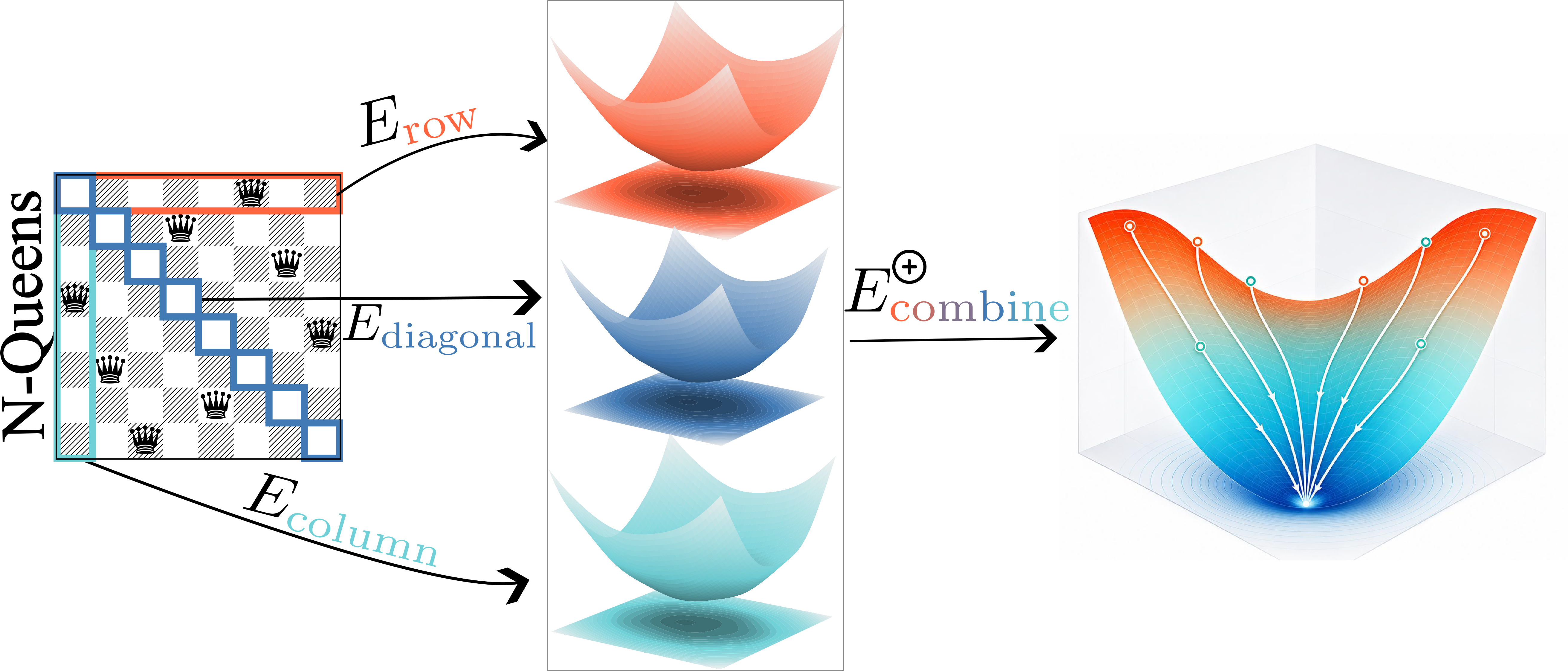}
    \caption{Our method composes convex factors into a smooth globally convex landscape, enabling efficient deterministic reasoning.}
    \vspace{-3mm}
    \label{fig:placeholder}
\end{figure}
\section{Introduction}
\label{sec:introduction}

Many reasoning problems are naturally compositional. A candidate solution is
valid only when it satisfies many local constraints: every row, column, and
diagonal in $N$-Queens; every edge in
graph coloring; or every clause in a satisfiability formula. This structure
suggests a simple route to generalization: learn an energy model for a small
local constraint, reuse it across all constraints in a larger instance, and
solve the full problem by minimizing the sum of the local energies. Compositional energy-based models follow exactly this principle. Given an
instance $x$ and a candidate relaxed solution $y$, they define $E_\theta(x,y)=\sum_{k=1}^K w_k f_\theta(y_{S_k};c_k)$, 
where each factor $f_\theta$ scores a local scope $S_k$ under context $c_k$.
The same factor network can therefore be trained on small subproblems and
applied to larger instances simply by adding more terms to the sum. This is
the main appeal of compositional energy minimization: it separates learning a
local rule from applying that rule many times.

However, composition also creates the main optimization difficulty. If the
factor energy is an unconstrained neural network, then the composed objective
is generally non-convex. Even when each local factor appears well behaved in
isolation, their sum can introduce \underline{spurious local minima} (see proposition \ref{prop:nonconvex-spurious-minima}). As the number of
constraints grows, the optimizer must search a larger and more rugged energy
landscape. Prior work addresses this issue with powerful sampling procedures,
such as long diffusion chains or Parallel Energy Minimization (PEM), which use
many particles and noise injection to escape poor local basins
\citep{oarga2025generalizable}. These methods can be effective, but they treat
the symptoms of a non-convex composed landscape rather than the cause.

In this paper, we take a different approach. We argue that the difficulty is
not inherent to compositional reasoning, but to the parameterization of the
local energy factors. If each factor is convex in the decision variable, and
the factors are composed using nonnegative weights, then the full relaxed
energy remains convex. Adding more constraints may make the optimization
problem larger or more ill-conditioned, but it cannot create new nonglobal
local minima. This changes the role of inference: instead of relying on
particle-based exploration to escape a rugged landscape, we can minimize a
globally well-behaved relaxed objective with projected first-order methods (see Lemma \ref{thm:first-order-supervision_a}).

We introduce \emph{Convex Compositional Energy Minimization} (CCEM), a
framework for compositional \underline{reasoning with convex factor energies}. Each local
factor is parameterized by an input-convex or partially input-convex neural
network \citep{amos2017input}, so that it remains convex in the candidate
solution variables while still conditioning flexibly on the problem context.
At test time, the global energy is obtained by summing all local factors and is
optimized over a tight convex relaxation of the feasible set, such as the
Birkhoff polytope for $N$-Queens or simplex-based relaxations for coloring. The resulting inference procedure is deterministic, differentiable,
and compatible with standard projected solvers.

Our training pipeline has two stages. First, we train the factor energy using
local contrastive supervision, assigning low energy to locally valid patterns
and higher energy to invalid alternatives. Second, we refine the composed
energy end-to-end by backpropagating through an unrolled projected solver. This
refinement aligns the energy with the behavior of the actual inference
procedure while preserving the convex structure of the model. Unlike
diffusion-based compositional EBMs, CCEM does not need to learn a denoising
trajectory in ambient Euclidean space; the optimization is performed directly
on the convex relaxation where feasibility is enforced by projection.

The central benefit of convexity is theoretical as well as practical. We show
that unconstrained non-convex factor composition can admit stable spurious
minima, causing local optimization to converge to invalid solutions. In
contrast, convex compositional energies cannot introduce such nonglobal local
minima over a convex relaxation. Moreover, if the task admits an exact convex
certificate, then a sufficiently accurate learned convex factorization has only
valid global minimizers. Finally, for convex energies, a first-order
optimality condition at the target solution is enough to certify global
optimality, explaining why auxiliary landscape-shaping losses are not required
to rule out spurious relaxed minima.

We make the following \textbf{contributions}:
\begin{itemize}
    \item We identify non-convex energy composition as a source of spurious
    stable minima in compositional EBMs, separating this issue from the choice
    of sampler.

    \item We propose a novel compositional reasoning framework that uses
    ICNN/PICNN factor energies so that nonnegative factor summation preserves
    convexity of the relaxed global objective.

    \item We develop a projected optimization pipeline for training and
    inference directly on tight convex relaxations, replacing diffusion
    sampling with deterministic first-order minimization.

    \item We provide theoretical guarantees showing that convex composition
    rules out spurious relaxed local minima and that first-order supervision is
    sufficient to certify global optimality in the convex setting.

    \item We evaluate the method on structured reasoning benchmarks, showing that convex compositional energies can
    match or improve particle-based compositional EBMs while using a simpler
    inference procedure.
\end{itemize}
\section{Preliminaries}
\subsection{Reasoning as compositional energy minimization}

A useful way to view reasoning is as search over possible answers. Given a problem instance $x$, the model defines an energy function $E_\theta(x,y)$ over candidate solutions $y$, where valid solutions should have low energy and invalid ones higher energy. Solving the problem then means finding
\[
    \hat y = \arg\min_y E_\theta(x,y).
\]
This differs from the usual end-to-end reasoning setup, where a model directly maps inputs to outputs and often learns heuristics tied to the training distribution. In the energy-based view, inference is an optimization process, so harder problems can be given more computation at test time. Compositional reasoning adds the idea that many reasoning problems are made of smaller constraints. For example, N-Queens combines row, column, and diagonal constraints; graph coloring combines edge constraints; and SAT combines clauses. Instead of learning a separate model for each full problem size, we learn a local energy for one constraint and reuse it across the larger instance. Thus, regular reasoning treats the problem more as one object, while compositional reasoning builds the solution by combining reusable pieces. Following \citet{oarga2025generalizable}, if feasibility factorizes into $K$ local constraints,
\[
    y \in \mathcal{S}(x)
    \iff
    \bigwedge_{k=1}^{K} \mathcal{C}_k(y_{\mathrm{sc}(k)}, c_k),
\]
we define the global energy as
\begin{equation}
    E_\theta(x,y)
    =
    \sum_{k=1}^{K} w_k\,
    f_\theta\!\left(y_{\mathrm{sc}(k)};\,c_k\right).
    \label{eq:total-energy}
\end{equation}
Here, $\mathrm{sc}(k)$ are the variables involved in the $k$-th constraint, $c_k$ is its context, and the same factor network $f_\theta$ is reused for all factors. In this sense, the model learns the local rule once and applies it many times. Larger problems are handled by adding more energy terms, rather than retraining a new model.

The benefit is that learning is separated from problem size. The difficulty is that summing many learned factors can make the energy landscape harder to optimize, especially when the factors are unconstrained neural networks. This is why prior work relies on sampling methods such as Parallel Energy Minimization, while our approach focuses on making the composed landscape better behaved by choosing factor energies whose structure is preserved under summation.
\subsection{Input Convex Neural Networks.}
ICNNs \cite{amos2017input} parameterize a scalar function that is convex in (a subset of) its inputs by combining non-negative weights with non-decreasing convex activations. They have been used for control \cite{chen2018optimal}, optimal transport \cite{makkuva2020optimal,korotin2021neural}, and domain adaptation \cite{asadulaev2021connecting, pmlr-v232-asadulaev23a}. To our knowledge, ICNNs have not previously been used as the per-factor energy of a compositional reasoning model: prior compositional EBMs deliberately use unrestricted MLP scores so that the composed landscape can be \emph{shaped} by additional losses, whereas we exploit the closure of convexity under non-negative sums to keep the composed objective tractable by construction.

\section{Convex Compositional Energy Minimization}
\label{sec:method}

Our main claim is that the main obstruction in unconstrained compositional
EBMs is not merely the choice of sampler. Even when each factor is smooth and
individually easy to minimize, summing non-convex factors can create stable
nonglobal attractors. A local optimizer initialized in the basin of such an
attractor will converge to an invalid solution. Thus, particle-based methods
such as PEM compensate for a pathological composed landscape rather than
removing its cause.

\begin{proposition}[Spurious minima under non-convex composition]
\label{prop:nonconvex-spurious-minima}
There exist a convex relaxation $\mathcal Y\subset\mathbb R^d$, smooth
non-convex factors $f_k$, and a composed energy \eqref{eq:total-energy} such that $E$ has a global minimizer $y^\star\in\mathcal Y$ and a distinct
strict local minimizer $\bar y\in\mathcal Y$ with
\[
    E(y^\star)=\min_{y\in\mathcal Y}E(y),
    \qquad
    \nabla E(\bar y)=0,
    \qquad
    \nabla^2 E(\bar y)\succ 0,
    \qquad
    E(\bar y)>E(y^\star).
\]
Moreover, for some stepsize $\eta>0$, gradient descent
$y_{t+1}=y_t-\eta\nabla E(y_t)$ converges to $\bar y$ from all initializations
in an open neighborhood of $\bar y$.
\end{proposition}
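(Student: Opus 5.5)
The proof is by explicit construction: a one-dimensional example, embedded in $\mathbb R^d$ if needed, with $\mathcal Y=[-2,2]$ (convex) and a composed energy given by a double-well polynomial written as a nonnegative-weighted sum of smooth non-convex factors. Concretely, I take $K=2$, $w_1=w_2=1$, and the factors
\[
f_1(y)=\tfrac14(y^2-1)^2,\qquad f_2(y)=\tfrac{\varepsilon}{2}(y-1)^2-\tfrac{\varepsilon}{2},
\]
or, if both factors should be visibly non-convex, I split the quartic between them as $f_1=f_2=\tfrac12\cdot\tfrac14(y^2-1)^2$ and add a small linear tilt $\varepsilon y$ to one of them. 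The natural reading of \eqref{eq:total-energy} is the tilted double well $E(y)=\tfrac14(y^2-1)^2+\varepsilon y$ with small $\varepsilon>0$. Its stationary points solve $y^3-y+\varepsilon=0$, and for $0<\varepsilon<2/(3\sqrt3)$ there are three real roots near $-1$, $0$, and $1$.

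First I verify the minimizers. Write $y^\star\approx-1-\varepsilon/2$ and $\bar y\approx 1-\varepsilon/2$. The second derivative is $E''(y)=3y^2-1$, which is close to $2>0$ at both of these roots, so both are strict local minimizers with $\nabla E=0$ and $\nabla^2E\succ0$. Both lie in the interior of $[-2,2]$, so they are minimizers of the constrained problem as well. Next I compare energies. By symmetry of the quartic, $E(\bar y)-E(y^\star)\approx \varepsilon(1-(-1))=2\varepsilon>0$. To make this rigorous I can use the implicit function theorem for the roots and expand to first order in $\varepsilon$. Alternatively, I avoid asymptotics by picking a concrete value such as $\varepsilon=0.1$ and evaluating numerically, though I prefer a clean argument: on $[-2,2]$ the global minimum is attained either at a stationary point or at the boundary. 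The boundary values $E(\pm2)=9/4\pm2\varepsilon$ exceed $E(\pm1)\le\varepsilon$, so the global minimum is attained at a stationary point, and it must be the left well because $E(-1)=-\varepsilon<\varepsilon=E(1)$ together with $E(\bar y)\ge E(1)-\text{small}$. To settle this comparison cleanly, I will bound $E(\bar y)\ge \min_{y\ge 0}E(y)\ge -\varepsilon\cdot 0=0$, which holds since $\varepsilon y\ge0$ on $y\ge0$, while $E(y^\star)\le E(-1)=-\varepsilon<0$.

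For the gradient-descent claim, I use the standard local contraction argument. The map $T(y)=y-\eta E'(y)$ has $T'(\bar y)=1-\eta E''(\bar y)$. With $E''(\bar y)=\lambda\in(0,3)$ and, say, $\eta=1/4$, I get $|T'(\bar y)|<1$. By continuity of $E''$ there is a neighborhood $U=(\bar y-r,\bar y+r)\subset\mathcal Y$ on which $|T'|\le q<1$. Then $T$ maps $U$ into itself and is a contraction there by the mean value theorem, so the iterates converge to $\bar y$ from every initialization in $U$. Since all iterates remain in $U\subset\mathcal Y$, no projection is needed.

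The main obstacle is the bookkeeping that makes "global minimizer distinct from $\bar y$ with strictly larger energy" airtight without heavy root computations. The sign argument above (restricting to $y\ge0$ versus $y=-1$) handles it, provided I confirm that $\bar y>0$. This follows because $y^3-y+\varepsilon$ is negative at $y=1/\sqrt3$ for small $\varepsilon$ and positive at $y=1$, which places the right-most root in $(1/\sqrt3,1)$, where also $E''=3y^2-1>0$.
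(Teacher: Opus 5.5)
Your construction is correct. It is the same basic idea as the paper's, a one-dimensional tilted double well on $\mathcal Y=[-2,2]$, but you verify it by a different and more elementary route.

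The paper takes $E_\alpha(y)=(y^2-1)^2+\alpha(y-1)^2$. Because the quadratic tilt is anchored at $y^\star=1$, $E_\alpha\ge 0$ with equality only at $y=1$, so the global minimizer and the gap $E(\bar y)>E(y^\star)$ are immediate. The survival of the spurious well near $-1$ comes from the implicit function theorem, valid only for ``sufficiently small'' $\alpha$.

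You instead use the linear tilt $E(y)=\tfrac14(y^2-1)^2+\varepsilon y$ and argue as follows:
\begin{itemize}
\item You locate $\bar y\in(1/\sqrt3,1)$ by the intermediate value theorem together with the monotonicity of $y^3-y+\varepsilon$ on $(1/\sqrt3,\infty)$.
\item You get the energy gap from the sign argument $E(\bar y)\ge 0>-\varepsilon=E(-1)\ge\min_{\mathcal Y}E$.
\end{itemize}
This gives three advantages over the paper's proof:
\begin{itemize}
\item an explicit admissible range $0<\varepsilon<2/(3\sqrt3)$ without perturbation theory;
\item an explicit contraction argument for gradient descent, whereas the paper only states without detail that the iterates remain in $U$;
\item with the split $f_1=\tfrac18(y^2-1)^2+\varepsilon y$, $f_2=\tfrac18(y^2-1)^2$, two factors that are both genuinely non-convex. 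The paper's $f_2=\alpha(y-1)^2$ is convex, which only meets the weaker reading ``not constrained to be convex.''
\end{itemize}

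One thing to clean up: your first ``concrete'' pair of factors does not produce the energy you go on to analyze. Since $\tfrac{\varepsilon}{2}(y-1)^2-\tfrac{\varepsilon}{2}=\tfrac{\varepsilon}{2}y^2-\varepsilon y$, that choice gives $E(y)=\tfrac14(y^2-1)^2+\tfrac{\varepsilon}{2}y^2-\varepsilon y$. Its stationarity equation is $(y-1)(y^2+y+\varepsilon)=0$, not $y^3-y+\varepsilon=0$. Its global minimizer is exactly $y=1$ and the spurious well sits near $-1$; this is the paper's example up to scaling. Your root locations, the claim $\bar y>0$, and the sign argument therefore do not apply to that pair. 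Drop it and state the split-quartic factors as your construction.
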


Proposition~\ref{prop:nonconvex-spurious-minima} shows that unconstrained
compositional energies can fail because their summed landscape may contain
stable but invalid local minima. This motivates imposing convexity at the
factor level. Since nonnegative sums of convex functions remain convex,
composition cannot introduce spurious local minima over a convex relaxation.

We next state the complementary positive result. If the task admits an exact
convex certificate over the chosen relaxation, then a learned convex
compositional energy that approximates this certificate with sufficient
accuracy has only valid global minimizers. Thus, the ICNN/PICNN
parameterization is not only an inference convenience: it preserves the global
geometry of the certificate under composition.

\begin{lemma}[Convex certificate sufficiency]
\label{lem:convex-certificate-sufficiency}
Let $\mathcal Y_x\subset\mathbb R^d$ be compact and convex, and let
$\mathcal V_x\subseteq\mathcal Y_x$ be the valid relaxed solutions. Consider Eq. \eqref{eq:total-energy} where each $f_\theta(\cdot;c_k)$ is convex in $y_{S_k}$. Suppose there exist
convex certificate factors $g_k$ such that
\[
    G_x(y)=\sum_{k=1}^K w_k g_k(y_{S_k};c_k),
    \qquad
    \arg\min_{y\in\mathcal Y_x}G_x(y)\subseteq\mathcal V_x,
\]
and every invalid $y\in\mathcal Y_x\setminus\mathcal V_x$ satisfies
\[
    G_x(y)\ge \min_{z\in\mathcal Y_x}G_x(z)+\gamma
\]
for some $\gamma>0$. If
\[
    \sup_{u\in \operatorname{proj}_{S_k}(\mathcal Y_x)}
    \left|f_\theta(u;c_k)-g_k(u;c_k)\right|\le \varepsilon
    \quad\text{for all }k,
    \qquad
    2\varepsilon\sum_{k=1}^K w_k<\gamma,
\]
then
\[
    \arg\min_{y\in\mathcal Y_x}E_\theta(x,y)\subseteq\mathcal V_x .
\]
\end{lemma}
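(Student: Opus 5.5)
The plan is a uniform approximation argument, comparing $E_\theta$ with $G_x$ pointwise on $\mathcal Y_x$ and then transferring the $\gamma$-gap. Write $W=\sum_{k=1}^K w_k$, with $w_k\ge 0$ as assumed throughout, so that composition preserves convexity. The first step is the pointwise bound. For any $y\in\mathcal Y_x$, the restriction $y_{S_k}$ lies in $\operatorname{proj}_{S_k}(\mathcal Y_x)$. The triangle inequality together with nonnegativity of the weights then gives
\[
    \bigl|E_\theta(x,y)-G_x(y)\bigr|\le \sum_{k=1}^K w_k\bigl|f_\theta(y_{S_k};c_k)-g_k(y_{S_k};c_k)\bigr|\le \varepsilon W .
\]
This is the only place where the sup-norm hypothesis is used, and it explains why the supremum is taken over the projection of $\mathcal Y_x$ and not over the whole of $\mathbb R^{|S_k|}$.

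The second step compares minimum values. Since $\mathcal Y_x$ is compact, we need both minima to be attained. For $G_x$ this is implicit in the hypothesis, which refers to $\min G_x$ and its argmin. For $E_\theta$, I would note that a convex function that is finite on $\mathcal Y_x$ is continuous on the relative interior but not necessarily up to the boundary. The cleanest route is to assume, as is the case for ICNN/PICNN factors built from continuous activations, that the $f_\theta$ are continuous. If the argmin of $E_\theta$ is empty, the inclusion holds vacuously anyway, so no generality is lost. Let $m_G=\min_{\mathcal Y_x}G_x$ and pick any $z^\star\in\arg\min G_x$. Then
\[
    \min_{\mathcal Y_x}E_\theta \le E_\theta(x,z^\star)\le m_G+\varepsilon W .
\]

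The third step is the separation argument. Take any invalid $y\in\mathcal Y_x\setminus\mathcal V_x$. By the gap assumption and the pointwise bound,
\[
    E_\theta(x,y)\ge G_x(y)-\varepsilon W\ge m_G+\gamma-\varepsilon W > m_G+\varepsilon W\ge \min_{\mathcal Y_x}E_\theta ,
\]
where the strict inequality is exactly $2\varepsilon W<\gamma$. Hence no invalid point attains the minimum of $E_\theta$, and every minimizer lies in $\mathcal V_x$. The hypothesis $\arg\min G_x\subseteq\mathcal V_x$ is not even needed beyond this; it is in fact implied by the gap condition.

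I expect no real obstacle. The only delicate points are bookkeeping. First, the factors must be controlled on $\operatorname{proj}_{S_k}(\mathcal Y_x)$, which holds automatically. Second, the weights must be nonnegative so the error bound scales as $\varepsilon W$, which I would state explicitly. Third, there is the attainment issue above. Convexity of $f_\theta$ plays no role in the inclusion itself. It only serves the paper's broader point that the resulting minimizers can then be reached by the projected first-order solver.
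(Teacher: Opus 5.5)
Your proposal is correct and follows the paper's proof essentially step for step: the pointwise bound $|E_\theta(x,y)-G_x(y)|\le\varepsilon\sum_k w_k$ on $\mathcal Y_x$, comparison against a minimizer of $G_x$, and strict separation of every invalid point via $2\varepsilon\sum_k w_k<\gamma$. Your side remarks are also right: the hypothesis $\arg\min G_x\subseteq\mathcal V_x$ already follows from the gap condition (the paper notes that $y^\star$ is valid but never uses this in its chain of inequalities), and convexity plays no role in the inclusion itself.
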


A central difficulty in compositional energy-based reasoning is that local
\emph{energy models do not automatically induce a well-behaved global landscape
after composition}. In non-convex compositional EBMs, even if each factor
captures a useful local constraint, the sum of many such factors may introduce
spurious local minima. Prior work therefore augments training with additional
contrastive losses that explicitly push correct solutions toward lower energy
and invalid solutions toward higher energy. Such shaping is needed because
local score or denoising supervision alone does not certify that the correct
solution is a global minimizer of the composed energy. Our convex formulation changes this situation. By parameterizing each factor
as an input-convex energy in the decision variable and composing factors by
nonnegative summation, the full relaxed energy remains convex. Therefore,
\emph{global optimality can be certified by a first-order condition at the target
solution}. The correct solution does not need to be pushed into a global basin
by an auxiliary contrastive loss: once the learned convex energy satisfies the
first-order optimality condition at the target, convexity implies global
optimality.

\begin{theorem}[First-order supervision is sufficient]
\label{thm:main-first-order}
Let $\mathcal Y\subset\mathbb R^d$ be convex, and let
$E_\theta(x,\cdot)$ be convex and differentiable. If
$y^\star\in\mathcal Y$ satisfies
\[
    \left\langle
    \nabla_y E_\theta(x,y^\star),\, y-y^\star
    \right\rangle \ge 0
    \qquad
    \forall y\in\mathcal Y,
\]
then
\[
    y^\star\in\arg\min_{y\in\mathcal Y}E_\theta(x,y).
\]
\end{theorem}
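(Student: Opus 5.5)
The plan is to apply the first-order characterization of convex differentiable functions directly; no earlier result from the paper is needed. Fix $x$ and write $E(y)=E_\theta(x,y)$. Convexity and differentiability of $E$ give the gradient inequality
\[
    E(y)\ \ge\ E(y^\star)+\left\langle \nabla E(y^\star),\, y-y^\star\right\rangle
    \qquad \forall y .
\]
Restricting to $y\in\mathcal Y$ and applying the variational inequality from the hypothesis, the inner product is nonnegative. Hence $E(y)\ge E(y^\star)$ for every $y\in\mathcal Y$. Since $y^\star\in\mathcal Y$, this means $y^\star$ attains the minimum of $E$ over $\mathcal Y$.

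The only step that needs justification is the gradient inequality itself. I would derive it from the definition of convexity along the segment $y^\star+t(y-y^\star)$ for $t\in(0,1]$:
\[
    \frac{E\bigl(y^\star+t(y-y^\star)\bigr)-E(y^\star)}{t}\ \le\ E(y)-E(y^\star).
\]
Letting $t\downarrow 0$, the left-hand side tends to the directional derivative, which by differentiability equals $\langle\nabla E(y^\star),y-y^\star\rangle$.

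The main subtlety is where differentiability is assumed. If $E$ is differentiable only on an open set containing $\mathcal Y$, the segment stays in $\mathcal Y$ by convexity, so the limit argument is still valid. If $y^\star$ lies on the boundary of $\mathcal Y$, the one-sided directional derivative along feasible directions is all that is used, and it still matches the inner product with the gradient. No compactness or existence assumption is needed, since $y^\star$ itself exhibits the minimum.
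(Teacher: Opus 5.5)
Your proof is correct and matches the paper's argument: the first-order convexity inequality $E_\theta(x,y)\ge E_\theta(x,y^\star)+\langle\nabla_y E_\theta(x,y^\star),y-y^\star\rangle$, combined with the assumed variational inequality, gives $E_\theta(x,y)\ge E_\theta(x,y^\star)$ for all $y\in\mathcal Y$. The paper simply cites the gradient inequality as standard, and your difference-quotient derivation along the segment correctly fills in that step.
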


Theorem~\ref{thm:main-first-order} follows from the standard first-order
convexity inequality:
\[
    E_\theta(x,y)
    \ge
    E_\theta(x,y^\star)
    +
    \left\langle
    \nabla_y E_\theta(x,y^\star),\, y-y^\star
    \right\rangle .
\]
The assumed first-order condition therefore implies
$E_\theta(x,y)\ge E_\theta(x,y^\star)$ for every feasible $y$. In contrast,
for a non-convex energy, the same local condition would certify only
stationarity or local optimality, not global optimality. This gives the main theoretical reason why convex compositional EBMs retain
the benefits of energy-based inference while avoiding auxiliary
landscape-shaping losses. The model still learns an energy over candidate
solutions and performs inference by minimizing that energy, but convexity makes
the relaxed inference problem globally well behaved.

\begin{corollary}[Convergence of convex inference]
\label{cor:main-convergence}
Assume $E_\theta(x,\cdot)$ is convex and $L$-smooth on $\mathcal Y$, and let
$y^\star\in\arg\min_{y\in\mathcal Y}E_\theta(x,y)$. Projected gradient descent
with step size $1/L$,
\[
    y_{t+1}
    =
    \Pi_{\mathcal Y}
    \left(
    y_t-\frac{1}{L}\nabla_yE_\theta(x,y_t)
    \right),
\]
satisfies
\[
    E_\theta(x,y_T)-E_\theta(x,y^\star)
    \le
    \frac{L\|y_0-y^\star\|_2^2}{2T}.
\]
If $E_\theta(x,\cdot)$ is additionally $\mu$-strongly convex, then
\[
    \|y_T-y^\star\|_2^2
    \le
    \left(1-\frac{\mu}{L}\right)^T
    \|y_0-y^\star\|_2^2 .
\]
\end{corollary}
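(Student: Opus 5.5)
Both claims are the standard projected-gradient rates for smooth (strongly) convex functions. I will prove them from three ingredients.

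\textbf{Ingredients.} First, the descent lemma from $L$-smoothness:
\[
E(y')\le E(y)+\langle\nabla E(y),y'-y\rangle+\tfrac L2\|y'-y\|^2 .
\]
Second, the convexity (or $\mu$-strong convexity) lower bound, in the form of Theorem~\ref{thm:main-first-order}'s first-order inequality. Third, the variational characterization of the projection onto the convex set $\mathcal Y$. Write $E=E_\theta(x,\cdot)$, $g_t=\nabla E(y_t)$, and $y_{t+1}=\Pi_{\mathcal Y}(y_t-g_t/L)$. The projection property gives
\[
\langle y_t-g_t/L-y_{t+1},\,z-y_{t+1}\rangle\le 0\qquad\forall z\in\mathcal Y.
\]

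\textbf{Key one-step inequality.} Combine the descent lemma at $y'=y_{t+1}$ with convexity at $y_t$ toward $z=y^\star$, namely $E(y_t)\le E(y^\star)+\langle g_t,y_t-y^\star\rangle$. Then use the projection inequality with $z=y^\star$ to replace $\langle g_t,y_{t+1}-y^\star\rangle$ by $L\langle y_t-y_{t+1},y_{t+1}-y^\star\rangle$. Completing the square via the polarization identity yields
\[
E(y_{t+1})-E(y^\star)\le \tfrac L2\left(\|y_t-y^\star\|^2-\|y_{t+1}-y^\star\|^2\right).
\]
Taking $z=y_t$ in the same argument gives monotonicity, $E(y_{t+1})\le E(y_t)$. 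Summing over $t=0,\dots,T-1$ telescopes the right-hand side to at most $\tfrac L2\|y_0-y^\star\|^2$. Monotonicity lets us bound $T\,(E(y_T)-E(y^\star))$ by the sum of the gaps, which gives the $O(1/T)$ bound.

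\textbf{Strongly convex case.} Repeat the argument with the strong convexity bound $E(y_t)\le E(y^\star)+\langle g_t,y_t-y^\star\rangle-\tfrac\mu2\|y_t-y^\star\|^2$. This gives
\[
E(y_{t+1})-E(y^\star)\le \tfrac L2\left((1-\tfrac\mu L)\|y_t-y^\star\|^2-\|y_{t+1}-y^\star\|^2\right).
\]
The left side is nonnegative because $y^\star$ minimizes $E$ over $\mathcal Y$ and $y_{t+1}\in\mathcal Y$. Hence
\[
\|y_{t+1}-y^\star\|^2\le(1-\mu/L)\|y_t-y^\star\|^2,
\]
and induction gives the contraction rate.

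\textbf{Main obstacle.} The main obstacle is the algebra of the one-step inequality: getting the signs right when merging the projection inequality with the descent lemma. The clean route is to view $y_{t+1}$ as the minimizer over $\mathcal Y$ of the $L$-strongly convex surrogate $\langle g_t,z-y_t\rangle+\tfrac L2\|z-y_t\|^2$. Its three-point property directly gives
\[
\langle g_t,y_{t+1}-y^\star\rangle+\tfrac L2\|y_{t+1}-y_t\|^2\le \tfrac L2\left(\|y_t-y^\star\|^2-\|y_{t+1}-y^\star\|^2-\|y_{t+1}-y_t\|^2\right)+\tfrac L2\|y_{t+1}-y_t\|^2 ,
\]
and the $\|y_{t+1}-y_t\|^2$ terms cancel against those from the descent lemma. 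Everything else is routine telescoping. Alternatively, these are textbook results (e.g.\ Beck's or Nesterov's projected gradient theorems), so citing them is also acceptable.
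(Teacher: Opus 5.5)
Your proof is correct, and it supplies something the paper omits. In Appendix~\ref{sec:proofs} the result is restated as Corollary~\ref{cor:projected-convergence_a} with no proof at all; the authors implicitly defer to standard projected-gradient theory.

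Your argument is exactly that standard theory. It rests on the one-step inequality $E(y_{t+1})-E(z)\le\frac L2\bigl(\|y_t-z\|^2-\|y_{t+1}-z\|^2\bigr)$ for $z\in\mathcal Y$. You use it at $z=y^\star$ for the telescoped rate and at $z=y_t$ for monotonicity, and under strong convexity the same inequality picks up an extra $-\frac\mu2\|y_t-y^\star\|^2$, giving the contraction. This is the analysis in Chapter 10 of Beck's \emph{First-Order Methods in Optimization}, so the only comparison is with the citation you already mention.

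Two hypotheses are left implicit by both you and the paper, and they are worth stating:
\begin{itemize}
\item The projection inequality needs $\mathcal Y$ nonempty and closed. The appendix version assumes this; the main-text version does not.
\item The $t=0$ instance of your one-step inequality applies the descent lemma on $[y_0,y_1]$ and convexity at $y_0$. So you need $y_0\in\mathcal Y$ (true for the paper's feasible warm starts) unless $E_\theta(x,\cdot)$ is convex and $L$-smooth on all of $\mathbb R^d$.
\end{itemize}
Monotonicity, by contrast, is only needed for $t\ge 1$, where $y_t\in\mathcal Y$ holds automatically.
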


Together, Theorem~\ref{thm:main-first-order} and
Corollary~\ref{cor:main-convergence} show that convex compositional energies
change the role of training. Please see the full proofs in Appendix \ref{sec:proofs}. Instead of learning an unconstrained landscape and
then correcting it with auxiliary energy-shaping losses, the model learns
inside a function class whose geometry makes local optimality certificates
global. Additional regularization may still be useful empirically, but it is
not required to rule out spurious minima in the relaxed composed energy.

\section{Algorithm}

In our algorithm, we use \emph{Partially Input Convex Neural Network}, so that $f_\theta(y;c)$ is convex in the scope
variable $y$ for every context $c$. Instead of training
$\nabla_y E_\theta$ to match Gaussian noise through a denoising-diffusion
objective and sampling with PEM in $\mathbb R^n$, we define $E_\theta$
directly on the convex hull of the feasible set---the simplex/Birkhoff
projection for $N$-Queens and the corresponding simplex relaxation for graph
coloring---and minimize it with a projected first-order solver. The convex
factor energy together with a tight relaxation removes most local minima in
$\mathcal Y$, so explicit particle resampling becomes optional rather than
essential. We instantiate $f_\theta$ as a PICNN, which interleaves a non-convex context
pathway with a convex scope pathway:
\begin{equation}
\begin{aligned}
    u_{\ell+1}
    &=
    \sigma\!\bigl(\widetilde W^u_\ell u_\ell+\widetilde b^u_\ell\bigr),
    \\
    z_{\ell+1}
    &=
    \sigma\!\Bigl(
        W^z_\ell
        \bigl(z_\ell\odot\mathrm{softplus}(W^{zu}_\ell u_\ell)\bigr)
        +
        W^y_\ell
        \bigl(y\odot W^{yu}_\ell u_\ell\bigr)
        +
        W^u_\ell u_\ell
        +
        b_\ell
    \Bigr).
\end{aligned}
\label{eq:picnn-layer}
\end{equation}
Here $\sigma$ is a non-decreasing convex activation, $W^z_\ell$ has
non-negative entries, implemented as
$W^z_\ell=\mathrm{softplus}(\widetilde W^z_\ell)$, and the gates
$\mathrm{softplus}(W^{zu}_\ell u_\ell)$ keep the $z$-pathway non-negative.
Therefore $f_\theta(\cdot;u)$ is convex in $y$ by composition. The context encodes all factor-specific information.

\textbf{Factor-level contrastive pretraining.}
Oarga and Du \cite{oarga2025generalizable} shape the energy with a
diffusion-MSE loss and a noise-corrupted contrastive loss. Because our factor
scope is small and the convex hull is known, \underline{we can drop the diffusion regression}
and use a purely contrastive InfoNCE objective on clean samples:
\begin{equation}
    \mathcal L_{\mathrm{NCE}}(\theta)
    =
    -\log
    \frac{
        \exp\bigl(-f_\theta(y^+;c)\bigr)
    }{
        \exp\bigl(-f_\theta(y^+;c)\bigr)
        +
        \sum_{j=1}^{J}\exp\bigl(-f_\theta(y^-_j;c)\bigr)
    }
    +
    \lambda_{\mathrm e}f_\theta(y^+;c).
    \label{eq:nce}
\end{equation}
Positives are vertices of the convex hull, such as one-hot row, column, or box
vectors and zero or one-hot diagonal vectors. Negatives are sampled outside the
hull using factor-specific proposals: under-satisfied, overloaded, two-peak,
and renormalized mixtures for $N$-Queens. This plays the role of the contrastive shaping loss in prior work, but acts at factor granularity before global composition.

\textbf{Composed-energy refinement through an unrolled solver.}
Oarga and Du \cite{oarga2025generalizable} refine the composed score by
regressing $\nabla_y\sum_k E^k_\theta$. We instead refine $E_\theta$ by
backpropagating through a differentiable projected solver on the relaxation.
Given ground-truth solutions $\{x_i^\star\}$, we run $T$ projected updates:
\begin{equation}
    y_{t+1}
    =
    \Pi_{\mathcal Y}
    \bigl(
        y_t-\eta_t A_t\nabla_yE_\theta(x,y_t)
    \bigr),
    \qquad
    t=0,\dots,T-1,
    \label{eq:projected-solver}
\end{equation}
where $A_t$ is either the identity, giving PGD, or the Adam preconditioner,
giving projected Adam. The step size $\eta_t$ follows a cosine schedule from
$\eta_0$ to $\eta_T$. We add a small uniform tie-breaking field
$\xi\sim\mathcal U([-1,1])^n$ scaled by $\lambda_{\mathrm{tb}}$ to break
permutation symmetry between equal-energy minima. With $S$ random feasible
warm-starts, we score each candidate under \texttt{no\_grad}, pick the
lowest-energy candidate, and backpropagate only through the solver trajectory
from that initialization. The refinement loss combines regression, an energy-margin term, and
hard-negative mining over the solver trajectory:
\begin{equation}
    \mathcal L_{\mathrm{board}}
    =
    \alpha\|\hat x-x^\star\|_2^2
    +
    \beta\,\mathrm{softplus}
    \bigl(
        E_\theta(x^\star)-E_\theta(\hat x)+\rho
    \bigr)
    +
    \gamma\,
    \frac{1}{|\mathcal H|}
    \sum_{\tilde x\in\mathcal H}
    \mathrm{softplus}
    \bigl(
        E_\theta(x^\star)-E_\theta(\tilde x)+\rho_{\mathrm h}
    \bigr).
    \label{eq:board-loss}
\end{equation}
Here $\mathcal H$ contains the last $H$ solver iterates, the warm-start
candidates, and a random convex mixture between the warm-start and the target.
The hard-negative term plays the role of a board-level contrastive loss and
prevents $E_\theta$ from collapsing onto trivial solutions of the regression
term.

\textbf{Inference}. At test time, we run \eqref{eq:projected-solver} from $S$ random simplex
warm-starts per instance, score each final point by $E_\theta$, and select the
lowest-energy solution. The relaxed heatmap is decoded by greedy row argmax.
Because $E_\theta$ is convex per factor and optimized on the tight relaxation
$\mathcal Y$, single-trajectory projected Adam already recovers most
solutions. For harder regimes, \emph{Parallel Energy
Minimization} (PEM) \cite[Alg.~1]{oarga2025generalizable} can be used by replacing
\eqref{eq:projected-solver} with a $P$-particle ensemble that resamples using
\[
    w_i^{(t)}
    \propto
    \exp\bigl(-E_\theta(y_i^{(t)})\bigr)
\]
and re-injects scheduled Gaussian noise. PEM is the only inference mode in
which our solver becomes equivalent to the sampler of
\cite{oarga2025generalizable}; it is useful when the relaxed landscape remains
multi-modal at large problem sizes.
\section{Related Work}
\label{sec:related}
\textbf{Reasoning as energy minimization.}
Casting reasoning as the search for a low-energy configuration of a learned scalar field has a long history in structured prediction \cite{lecun2006tutorial} and has recently been revived as a paradigm for combinatorial problem solving. Most directly related to us, \cite{oarga2025generalizable} compose subproblem energies at inference time and introduce Parallel Energy Minimization (PEM), a sequential-Monte-Carlo-style sampler that maintains a population of particles to escape local minima of the composed landscape. Our work shares the compositional construction but removes the source of those local minima at the parameterization level: by enforcing convexity in the decision variable per factor, the composed energy stays convex regardless of how many factors are stacked, and a projected first-order solver suffices in place of a particle ensemble.

\textbf{Neural combinatorial optimization.}
Graph Neural Networks are the dominant tool for learning-based combinatorial optimization, including direct supervised solvers for TSP and MIS \cite{joshi2019efficient}, reinforcement-learning approaches \cite{bello2016neural}, GFlowNets \cite{zhang2022generative}, discrete diffusion solvers \cite{sun2023difusco,li2024fast}, and dedicated neural SAT solvers \cite{selsam2018learning,li2022nsnet}. Compositional EBMs side-step this by re-using a small per-factor model across arbitrarily many constraints; we keep this advantage while making inference deterministic and convex.

\textbf{Iterative computation for reasoning.}
Recurrent and program-style architectures iteratively refine partial solutions \cite{graves2014neural,kaiser2015neural,schwarzschild2021can}, and neural SAT solvers refine truth assignments via belief propagation \cite{li2022nsnet} or message passing \cite{selsam2018learning}. Our projected solver in \eqref{eq:projected-solver} is itself an iterative refinement procedure, but the iteration is performed on a continuous relaxation under a learned convex objective rather than on the discrete state space, and the number of iterations can be increased at test time to handle larger instances without retraining.


\section{Experiments}
\label{sec:experiments}

We replicate a small compositional benchmark using convex relaxations to validate the optimization stability. We train a per-constraint input-convex energy on subproblem data only, compose the full energy at test time, and run convex inference. We specifically benchmark against the original framework's $N$-Queens formulation, which decomposes the board into row, column, and diagonal energy functions trained on a single eight-queens instance, Graph Coloring formulation, which decomposes graph node‑wise and edge‑wise and the 3-SAT formulation, which composes models trained to generate valid assignments for single clauses generated in the hard phase transition region. 

We study the tradeoff between convexity strength and expressivity by moving from fully convex networks to partially convex hybrids. We hypothesize our convex formulation will match the baseline's near-perfect correct instance rates on the logical reasoning tasks while fundamentally addressing its limitations in achieving fully optimal solutions without conflicting edges in the highly constrained graph coloring environments by eliminating the local minima that plague particle-based samplers.
\begin{wrapfigure}{r}{0.5\textwidth}
    \vspace{-5pt}
    \centering
    \includegraphics[width=\linewidth]{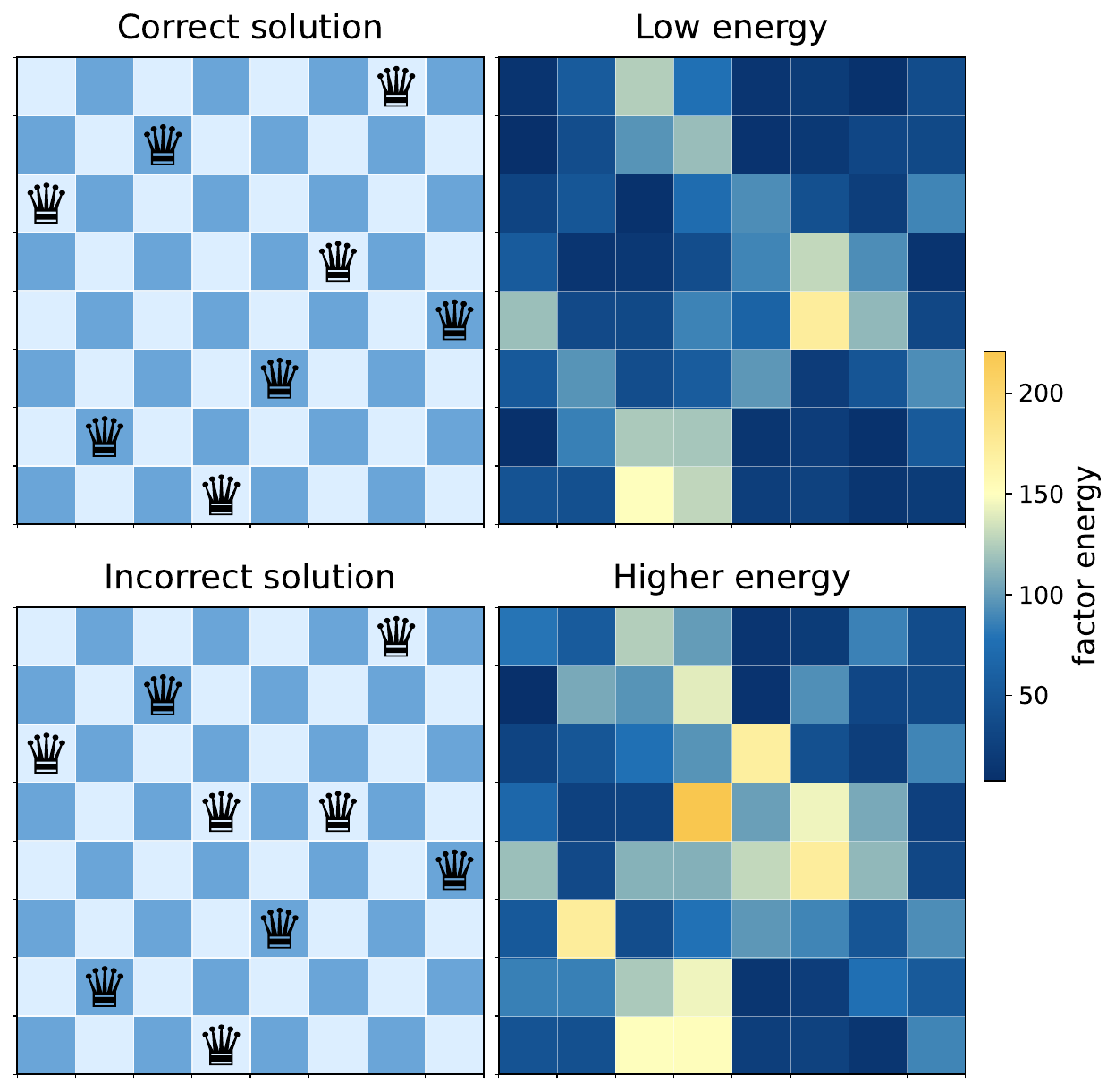}
    \caption{
        Energy map for 8-Queens. A correct solution is assigned
        low energy, while adding an extra queen produces higher local energy.
    }
    \vspace{-15pt}
    \label{fig:nqueens-energy-map}
\end{wrapfigure}

We address four questions: \textbf{(Q1)} does enforcing convexity in the
scope variable improve over the unconstrained MLP score network? \textbf{(Q2)} does projecting onto the exact convex hull of the feasible set replace the need for a long diffusion chain? \textbf{(Q3)} how do projected (multi-start) Adam and
PEM compare as inference samplers on the \emph{same} learned energy?
\textbf{(Q4)} does compositional generalisation to larger instances carry over to
the convex factor energy?

\subsection{Tasks and protocols}
\textbf{$N$-Queens.} We train on a single $8\!\times\!8$ instance, decompose it row-wise, column-wise and diagonal-wise, and re-use $f_\theta$ across rows, columns and (anti-)diagonals at  test time. We evaluate by sampling 100 boards, decoding greedily, and reporting: number of correct instances (out of 100), average number of queens placed, and average row/column/diagonal conflicts. For generalisation we additionally report results on $N\!\in\!\{4,5,6,7,9,10\}$.

\textbf{Color graph.} We train on a single edge-level subtask: generating a valid coloring for a pair of nodes (an edge) given a set of available colors. We decompose the full graph instance edge‑wise. The model is trained on randomly generated pairs of different colors. For evaluation, we sample graphs from the well‑known COLOR benchmark, as well as from random distributions (Erdos–Renyi, Holme–Kim, random regular expanders, Paley graphs, and complete graphs). We consider both smaller (20–40 nodes) and larger (80–100 nodes) graphs.
\begin{figure}[t!]
    \centering
    \includegraphics[width=\linewidth]{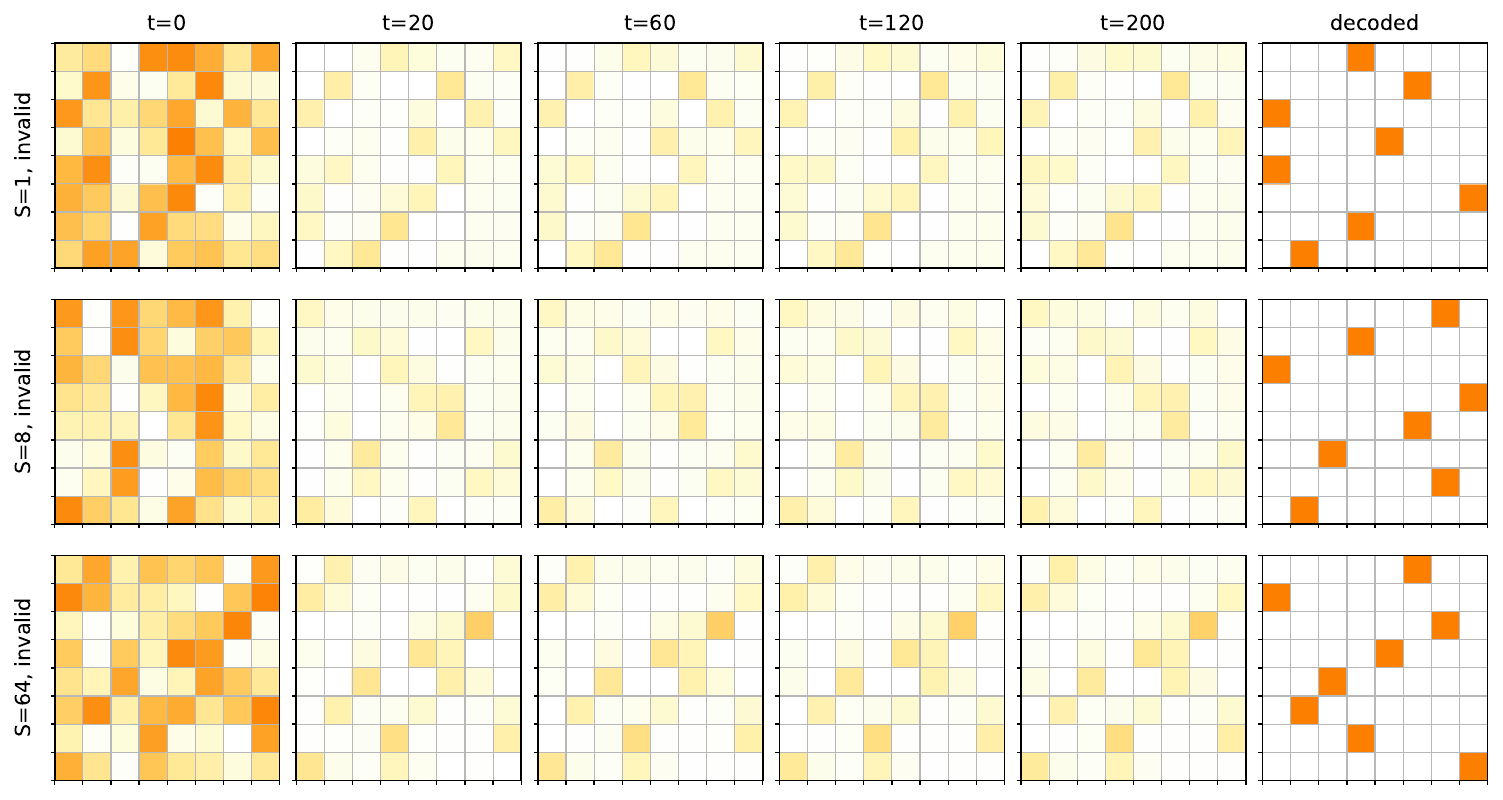}
    \caption{
        Optimized samples across projected Adam timesteps for 8-Queens.
        Intermediate columns show relaxed boards before decoding, and the final
        column shows the greedy-decoded board. Increasing the number of starts
        improves the chance of reaching a valid solution.
    }
    \label{fig:nqueens-timesteps}
\end{figure}

\subsection{Baselines}
For $N$-Queens we reproduce the comparison table of
\cite{oarga2025generalizable} and add our PICNN factor energy. The
non-EBM baselines (GFlowNets~\cite{zhang2022generative}, DIFUSCO~\cite{sun2023difusco},
Fast~T2T~\cite{li2024fast}) take the $N$-Queens problem encoded as a
Maximum Independent Set. The EBM baseline of
\cite{oarga2025generalizable} (\textbf{EBM-Diff}, $P{=}1024$, PEM) shares
the compositional construction with us, but uses an unconstrained score
network trained with combination of diffusion and contrastive losses.

For Graph Coloring we compare against GNN-GCP~\cite{lemos2019graph}, canonical Graph Neural Networks (GCN~\cite{kipf2016semi} and GAT~\cite{velivckovic2017graph}), RL guided by Neural Algorithmic
Reasoners (XLVIN~\cite{deac2020xlvin}) and EBM baseline~\cite{oarga2025generalizable} (P=128).

\subsection{Implementation details}
Whereas \cite{oarga2025generalizable} runs $E_\theta$ on $\mathbb R^n$ and
relies on the diffusion process to deliver near-binary samples, we
optimise on the smallest tight convex set that contains every feasible
configuration, projected by alternating row/column simplex
sweeps $\mathcal{Y}=\mathcal{B}_N=\{X\in[0,1]^{N\times N}\mid
\sum_iX_{ij}=\sum_jX_{ij}=1\}$. Diagonal factors live on the relaxed simplex
$\{y\in[0,1]^d\mid\sum_i y_i\le 1\}$.
We denote the projector $\Pi_{\mathcal{Y}}$. Crucially, $\Pi_{\mathcal{Y}}$
is differentiable almost everywhere, which lets gradients flow through the
solver during training.

\definecolor{darkgreen}{RGB}{0,80,0}

\begin{table}[t]
\centering
\small
\caption{8-Queens evaluation (100 sampled boards). Baselines reproduced
from \cite[Tab.~1]{oarga2025generalizable}; ICNN+ECM lines are ours.}
\label{tab:nqueens-main}
\begin{tabular}{lccc}
\toprule
Model & Type & Correct $\uparrow$ & Queens placed $\uparrow$ \\
\midrule
LWD                              & RL+S         & 22 & $7.10\std{\pm0.57}$ \\
GFlowNets                        & UL+S         & 14 & $6.93\std{\pm0.59}$ \\
DIFUSCO ($T{=}50$)               & SL+S         & 17 & $6.94\std{\pm0.65}$ \\
Fast T2T ($T_S{=}1,T_G{=}1$)     & SL+S         & 21 & $6.82\std{\pm0.88}$ \\
Fast T2T ($T_S{=}5,T_G{=}5$, GS) & SL+GS        & 41 & $7.38\std{\pm0.54}$ \\
EBM-Diff ($P{=}1024$, PEM) & SL+PEM & 97 & $7.97\std{\pm0.17}$ \\
\midrule
\textbf{CCEM (ours)}  & SL+Adam  &  \cellcolor{darkgreen!10}{\textbf{100}}  & \cellcolor{darkgreen!10}{$\textbf{8.00}\std{\pm0.00}$} \\
\bottomrule
\end{tabular}
\end{table}

The factor PICNN has 3 hidden layers of width 256 with softplus
($N$-Queens, Graph Coloring) or ReLU (3-SAT). Phase~1 is trained for 1000 epochs of AdamW at lr $10^{-3}$, weight decay $10^{-4}$, $J$ (4 for $N$-Queens, 7 for 3-SAT, 5 for Graph Coloring) negatives per positive. Phase~2 is trained for 300 epochs, unrolls the projected 
solver for $T{=}140$ steps at training time and at inference, with cosine
step decay $0.1\!\to\!0.02$ (train) and $0.05\!\to\!0.005$ (eval),
$S{=}4$ warm-starts in training, $S{=}64$ in evaluation, hard-negative
count $H{=}12$, loss weights $(\alpha,\beta,\gamma){=}(1,0.25,0.25)$,
margins $\rho{=}\rho_{\mathrm h}{=}0.1$, tie-breaking
$\lambda_{\mathrm{tb}}{=}10^{-4}$. We use simplex projection. All experiments run 
on a single GPU.

\subsection{Results}
\textbf{Convexity vs.\ MLP score.}
Table~\ref{tab:nqueens-main}, Table~\ref{tab:gc-main} and Table~\ref{tab:sat-main} present the empirical results comparing our CCEM approach against the compositional baseline across three tasks. In detail, we achieved significant improvement in $N$-Queens and Graph Coloring problems, while 3-SAT results are moderate.

\textbf{Tight relaxation replaces the diffusion chain.}
\cite{oarga2025generalizable} need $T{=}100$ diffusion steps and a
$P{=}1024$ particle ensemble to lift Gaussian noise to the binary
solution manifold. With the Birkhoff projector $\Pi_{\mathcal{B}_N}$
applied at every solver step, the relaxation is already tight, and we
match their accuracy with $T{=}50$ projected-Adam steps and
$S{=}8$ warm-starts (Table~\ref{tab:nqueens-main}). Importantly for EBM-Diff we report the score presented in the paper. Our implementation using a source code given by authors achieved only 75/100.

\textbf{Compositional generalisation to larger $N$.}
We train once on $N{=}8$, evaluate on $N\!\in\!\{7,8,9,10\}$ by re-summing
$f_\theta$ over the rows, columns and diagonals of the larger board, and
sweep the inference budget. Because $f_\theta$ conditions on the scope
mask and active length, the same network generalises to all four sizes
without retraining. Increasing $S$ (multi-start) for projected Adam and
$P$ (particles) for PEM both monotonically improve correct-instance
counts; PEM with $P{=}128$ matches projected Adam with $S{=}32$ on
$N{=}10$.

\definecolor{lightgreen}{rgb}{0.9, 1, 0.6}  
\definecolor{darkgreen}{RGB}{0,80,0}

\begin{table}[t]

\centering
\scriptsize
\caption{Graph Coloring evaluation (100 sampled tasks).}
\label{tab:gc-main}
\begin{tabular}{lcccccc}
\toprule
Distribution & GCN & GAT & XLVIN & GNN-GCP & EBM (P=128) & CCEM (Ours) \\
\midrule
Erdos Renyi & 46.80 $\pm$ 20.47 & 34.00 $\pm$ 11.55 & 25.00 $\pm$ 7.81 & 15.20 $\pm$ 4.32 & 8.60 $\pm$ 4.82 & \cellcolor{darkgreen!10}{\textbf{2.20 $\pm$ 2.14}} \\
Erdos Renyi 2 & 151.60 $\pm$ 12.09 & 130.20 $\pm$ 11.47 & 93.80 $\pm$ 31.12 & 53.80 $\pm$ 8.34 & 29.20 $\pm$ 8.05 & \cellcolor{darkgreen!10}{\textbf{4.60 $\pm$ 2.94}} \\
Holme Kim & 74.00 $\pm$ 14.74 & 51.20 $\pm$ 10.03 & 29.00 $\pm$ 7.75 & 13.20 $\pm$ 7.46 & 10.60 $\pm$ 2.70 & \cellcolor{darkgreen!10}{\textbf{5.00 $\pm$ 3.41}} \\
Holme Kim 2& 408.00 $\pm$ 26.40 & 253.20 $\pm$ 50.71 & 182.60 $\pm$ 24.73 & 55.20 $\pm$ 12.63 & 59.00 $\pm$ 3.74 & \cellcolor{darkgreen!10}{\textbf{14.00 $\pm$ 1.26}} \\
Regular Expander & 87.60 $\pm$ 22.58 & 58.60 $\pm$ 12.44 & 29.00 $\pm$ 7.75 & 15.40 $\pm$ 6.65 & 11.00 $\pm$ 4.89 & \cellcolor{darkgreen!10}{\textbf{4.20 $\pm$ 3.43}} \\
Regular Expander 2 & 144.80 $\pm$ 6.90 & 118.80 $\pm$ 12.59 & 112.60 $\pm$ 10.97 & 141.60 $\pm$ 69.47 & 37.20 $\pm$ 4.71 & -- \\
Paley & 285.00 $\pm$ 117.70 & 239.20 $\pm$ 159.43 & 151.80 $\pm$ 92.13 & 91.20 $\pm$ 63.14 & 34.80 $\pm$ 20.27 & \cellcolor{darkgreen!10}{\textbf{1.80 $\pm$ 3.60}} \\
Complete & 46.00 $\pm$ 15.04 & 46.00 $\pm$ 15.04 & 34.80 $\pm$ 16.42 & 30.00 $\pm$ 2.54 & 3.40 $\pm$ 1.14 & \cellcolor{darkgreen!10}{\textbf{0.00 $\pm$ 0.00}} \\
\bottomrule
\end{tabular}
\end{table}

\begin{wraptable}{r}{0.45\textwidth}
\vspace{-10pt}
\centering
\caption{Phase-2 loss ablation on 8-Queens (100 sampled boards, projected Adam, $S{=}64$).}
\label{tab:loss-ablation-s64}
\begin{tabular}{cccc}
\toprule
MSE ($\alpha$) & Rank ($\beta$) & Hard ($\gamma$) & Correct \\
\midrule
\checkmark &  &  & 100 \\
 & \checkmark & \checkmark & 100 \\
\checkmark & \checkmark &  & 100 \\
\checkmark &  & \checkmark & 100 \\
\checkmark & \checkmark & \checkmark & 100 \\
\bottomrule
\end{tabular}
\vspace{-10pt}
\end{wraptable}

\paragraph{Loss ablation.}
We ablate the Phase~2 loss components and observe that all variants remain
surprisingly strong. This suggests that the main gain comes not from a single
loss term, but from the compositional energy structure itself: row, column, and
diagonal factors already impose a strong inductive bias. With $S{=}64$ starts,
all variants solve all sampled instances, indicating that the inference
procedure can reliably recover good solutions once enough starts are used.

\textbf{Sharpness diagnostics.}
For every relaxed heatmap we report row entropy, the top-2 row gap, the
fraction of rows with $\max_jX_{ij}>0.9$, and row/col sum-to-one
errors after projection. The PICNN energy concentrates on a single
column per row (gap $\approx 1$, entropy $\approx 0$, sum errors
$<10^{-3}$), confirming that the convex factor energy yields a
\emph{decisive} basin on the Birkhoff polytope rather than a smeared
heatmap that requires temperature-style decoding.

\section{Limitations and Broader Impacts}
\label{sec:limitations}

Compositional energy minimization adds constraints by summing energies, and input-convex architectures keep the resulting decision landscape free of new nonconvex traps. Prior work shows strong generalization, including SAT instances with hundreds of clauses after training only on single clauses, and unseen Paley or random regular expander graphs. 

In our view, composition turns generalization into a convex program: harder tasks mainly require more solver precision, not escape from pathological geometry. The tradeoff is that convexity can be too restrictive, producing flat regions, fractional relaxations, or poor fits to multimodal discrete solution spaces.

\section{Conclusion}
We introduced Convex Compositional Energy Minimization (CCEM), a framework for compositional reasoning that replaces unconstrained factor energies with input-convex factor models and performs inference over tight convex relaxations. By preserving convexity under nonnegative factor composition, CCEM removes a central source of spurious local minima and enables deterministic projected first-order optimization in place of long diffusion or particle-based sampling procedures. 

Our theoretical results show that convex composition provides global optimality guarantees in the relaxed problem, while our experiments on structured reasoning tasks demonstrate strong generalization from small training instances to larger combinatorial problems. These results suggest that enforcing favorable energy geometry at the factor level is a promising direction for scalable and reliable neural reasoning.

\clearpage
\bibliographystyle{plainnat}
\bibliography{bib/references}

\clearpage
\appendix
\section{Proofs}
\label{sec:proofs}
\begin{proposition}[Non-convex factor composition admits spurious stable minima]
\label{prop:nonconvex-spurious-minima_a}
Let $\mathcal Y \subset \mathbb R^d$ be a convex relaxation of a discrete reasoning problem, and let a compositional energy be
\[
E(y) = \sum_{k=1}^K f_k(y_{S_k}),
\]
where each factor $f_k : \mathbb R^{|S_k|} \to \mathbb R$ is twice continuously differentiable and
$S_k \subseteq \{1,\dots,d\}$ denotes the scope of factor $k$. Assume the factors are not constrained to be convex in their scope variables.

Then there exist smooth factor energies $\{f_k\}_{k=1}^K$ and a feasible global solution
$y^\star \in \mathcal Y$ such that:
\begin{enumerate}
    \item $y^\star$ is a global minimizer of the composed energy:
    \[
    E(y^\star) = \min_{y \in \mathcal Y} E(y).
    \]

    \item The composed energy also contains a point $\bar y \in \mathcal Y$, with
    $\bar y \neq y^\star$, satisfying
    \[
    \nabla E(\bar y) = 0,
    \qquad
    \nabla^2 E(\bar y) \succ 0.
    \]

    \item Therefore, $\bar y$ is a strict local minimum of $E$, but not a global minimizer:
    \[
    E(\bar y) > E(y^\star).
    \]
\end{enumerate}

Moreover, for gradient descent
\[
y_{t+1} = y_t - \eta \nabla E(y_t),
\]
there exists a stepsize $\eta > 0$ and an open neighborhood
$U \subset \mathcal Y$ of $\bar y$ such that every initialization $y_0 \in U$
converges to $\bar y$. Hence failure from such initializations is caused by the
geometry of the composed energy landscape, rather than by the absence of a
sufficiently expressive local inference procedure.
\end{proposition}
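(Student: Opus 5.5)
The proof is an explicit construction: one dimension suffices, and the only work is making the spurious minimum lie in the interior of $\mathcal Y$ so that unconstrained gradient descent stays there. Take $d=1$, $\mathcal Y=[-2,2]$ (convex, and it contains the relaxation $[0,1]$ of a binary variable after an affine shift), and $K=2$ factors with scope $S_1=S_2=\{1\}$. Put
\[
f_1(y)=(y^2-1)^2,\qquad f_2(y)=\tfrac12 y .
\]
Both are $C^\infty$, and $f_1$ is non-convex. Each factor taken alone is benign: $f_1$ has the two symmetric global minima $\pm1$, and $f_2$ is a linear tilt. Their sum $E=f_1+f_2$ breaks the symmetry and produces one global basin and one spurious basin. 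If one prefers to split the example across two distinct scopes, the same construction can be taken in $d=2$ with $E(y)=f_1(y_1)+f_2(y_1)+y_2^2$. I would keep $d=1$.

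The key steps, in order, are as follows.

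\begin{enumerate}
\item Locate the critical points. We have
\[
E'(y)=4y^3-4y+\tfrac12 ,
\]
a cubic with discriminant sign change. By the intermediate value theorem it has three real roots $r_1<r_2<r_3$: $E'(-2)<0<E'(0)$, $E'(0.5)<0$ and $E'(1)>0$. So $r_1\in(-2,0)$, $r_2\in(0,0.5)$ and $r_3\in(0.5,1)$, all in the interior of $\mathcal Y$.
\item Classify them. Since $E''(y)=12y^2-4$, we need to check that $|r_1|,|r_3|>1/\sqrt3$. Evaluating $E'$ at $\pm1/\sqrt3$ gives the sign pattern that forces $r_1<-1/\sqrt3$ and $r_3>1/\sqrt3$. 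Hence $E''(r_1),E''(r_3)>0$, and these are strict local minima.
\item Compare values. Using $E\approx f_1+\tfrac12 y$ near $\pm1$, we get $E(r_1)\approx-\tfrac12<\tfrac12\approx E(r_3)$. To make this rigorous, I would bound $E(r_1)\le E(-1)=-\tfrac12$ and $E(r_3)\ge\min_{y\in[0.5,1]}E(y)\ge 0+\tfrac14>0$.
\item Identify the global minimizer. Checking the endpoints, $E(\pm2)\ge 8$. So $y^\star=r_1$ is the global minimizer on $\mathcal Y$, and $\bar y=r_3$ is a strict nonglobal local minimizer with $\nabla E(\bar y)=0$, $\nabla^2E(\bar y)\succ0$ and $E(\bar y)>E(y^\star)$.
\end{enumerate}

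For gradient descent, I would apply the standard local contraction argument. The map $T(y)=y-\eta E'(y)$ has $T(\bar y)=\bar y$ and $T'(\bar y)=1-\eta E''(\bar y)$. Choose $\eta\in(0,2/E''(\bar y))$; $\eta=1/E''(\bar y)$ is simplest. Then $|T'(\bar y)|<1$, and by continuity of $E''$ there is $\delta>0$ with $|T'(y)|\le q<1$ on $U=(\bar y-\delta,\bar y+\delta)\subset\mathcal Y$. By the mean value theorem, $|T(y)-\bar y|\le q|y-\bar y|$, so $T$ maps $U$ into itself and iterates converge geometrically to $\bar y$. The iterates never leave $U\subset\operatorname{int}\mathcal Y$, so no projection is needed, which matches the unprojected update in the statement.

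I expect the main obstacle to be purely bookkeeping: certifying rigorously, rather than numerically, that the roots of the cubic lie where claimed and that $E(r_3)>E(r_1)$. The interval sign checks and the bounds in step 3 handle this. If those bounds prove awkward, I would instead pick a tilt with cleaner algebra. For example, $E(y)=(y^2-1)^2+\epsilon(y-1)^2$ with small $\epsilon$ keeps $y^\star=1$ as an exact critical point, and the implicit function theorem then perturbs $-1$ to a nearby strict local minimum with strictly larger energy.
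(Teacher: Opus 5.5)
Your proof is correct. Its skeleton matches the paper's: one dimension, $\mathcal Y=[-2,2]$, the double well $f_1(y)=(y^2-1)^2$, a second factor that tilts the two wells, and a local contraction argument for gradient descent. The difference is in the tilt and in how the minima are certified.

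The paper uses $f_2(y)=\alpha(y-1)^2$ with $\alpha$ small. This keeps $y^\star=1$ exactly, with $E_\alpha(1)=0$, so global optimality and the strict gap $E_\alpha(\bar y_\alpha)>0$ follow at once from nonnegativity of both factors. The spurious minimizer near $-1$ is then obtained non-constructively from the implicit function theorem, using $E_0''(-1)=8\neq 0$.

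Your fixed linear tilt $f_2(y)=y/2$ gives up the clean global minimizer; yours is an irrational cubic root $r_1\approx -1.06$. That is why you need the sign checks of $E'$ at $-2,\pm 1/\sqrt3,0,\tfrac12,1$. In exchange, everything is explicit and quantitative:
\begin{itemize}
\item the critical points are localized;
\item the gap $E(\bar y)-E(y^\star)\ge \tfrac14+\tfrac12=\tfrac34$ is explicit;
\item the stepsize $\eta=1/E''(\bar y)$ is concrete.
\end{itemize}
Your gradient-descent step is also more careful than the paper's. You use the mean value theorem with $|T'|\le q<1$ on an interval centered at $\bar y$ inside $\operatorname{int}\mathcal Y$. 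The paper only asserts that iterates remain in $U$.

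Two small points to tighten:
\begin{itemize}
\item Justify $E(r_1)\le E(-1)$ by noting that $E'<0$ on $[-2,r_1)$ and $E'>0$ on $(r_1,r_2)$ with $r_2>0$. Hence $r_1$ minimizes $E$ on $[-2,0]$.
\item In step 4, also dispose of $r_2$. This is immediate, since $E(y)\ge y/2\ge 0>E(r_1)$ on $[0,2]$.
\end{itemize}
Your fallback construction is precisely the paper's.
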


\begin{proof}
It suffices to give a one-dimensional construction. Let
\[
\mathcal Y = [-2,2],
\]
and define two factor energies
\[
f_1(y) = (y^2 - 1)^2,
\qquad
f_2(y) = \alpha (y - 1)^2,
\]
where $\alpha > 0$ is a small constant. The composed energy is
\[
E_\alpha(y)
= f_1(y) + f_2(y)
= (y^2 - 1)^2 + \alpha (y - 1)^2.
\]

The point
\[
y^\star = 1
\]
is a global minimizer because
\[
E_\alpha(1) = 0,
\]
and both terms in $E_\alpha$ are nonnegative.

Now consider the behavior near $y=-1$. When $\alpha = 0$, we have
\[
E_0(y) = (y^2 - 1)^2,
\]
and $y=-1$ is a strict local minimum because
\[
E_0'(-1) = 0,
\qquad
E_0''(-1) = 8 > 0.
\]

Since $E_\alpha$ is a smooth perturbation of $E_0$, the implicit function
theorem implies that for all sufficiently small $\alpha > 0$, there exists a
critical point $\bar y_\alpha$ near $-1$ such that
\[
E_\alpha'(\bar y_\alpha) = 0.
\]
Furthermore, by continuity of the second derivative,
\[
E_\alpha''(\bar y_\alpha) > 0
\]
for all sufficiently small $\alpha > 0$. Therefore, $\bar y_\alpha$ is a strict
local minimum of $E_\alpha$.

However,
\[
\bar y_\alpha \neq 1.
\]
Since $E_\alpha(1)=0$ is the global minimum and $E_\alpha(y)>0$ for all
$y \neq 1$, we have
\[
E_\alpha(\bar y_\alpha) > E_\alpha(1).
\]
Thus $\bar y_\alpha$ is a strict spurious local minimum of the composed energy.

Finally, because
\[
E_\alpha''(\bar y_\alpha) > 0,
\]
there exists an open neighborhood $U$ of $\bar y_\alpha$ on which
$E_\alpha$ is strongly convex and has Lipschitz-continuous gradient. For a
sufficiently small stepsize $\eta > 0$, gradient descent initialized in $U$
remains in $U$ and converges to the unique local minimizer $\bar y_\alpha$.
This proves the claim.
\end{proof}

\begin{lemma}[Convex Certificate Sufficiency for Compositional Reasoning]
\label{lem:convex-certificate-sufficiency_a}
Let $\mathcal Y_x \subset \mathbb R^d$ be a compact convex relaxation of the
solution space for an instance $x$, and let $\mathcal V_x \subseteq \mathcal Y_x$
denote the set of valid relaxed solutions. Consider a compositional energy of the
form
\[
E_\theta(x,y)
=
\sum_{k=1}^K w_k f_\theta(y_{S_k};c_k),
\qquad
w_k \ge 0,
\]
where $S_k \subseteq \{1,\dots,d\}$ is the scope of factor $k$, $c_k$ is its
context, and each factor $f_\theta(\cdot;c_k)$ is convex in its scope variable
$y_{S_k}$.

Assume that the task admits an exact convex compositional certificate. That is,
there exist convex local functions $g_k(\cdot;c_k)$ such that
\[
G_x(y)
=
\sum_{k=1}^K w_k g_k(y_{S_k};c_k)
\]
satisfies
\[
\arg\min_{y\in \mathcal Y_x} G_x(y)
\subseteq
\mathcal V_x,
\]
and there exists a margin $\gamma>0$ such that, for every invalid point
$y\in \mathcal Y_x\setminus \mathcal V_x$,
\[
G_x(y)
\ge
\min_{z\in \mathcal Y_x}G_x(z)+\gamma .
\]

Suppose further that the learned factor class uniformly approximates each
certificate factor with error at most $\varepsilon$, meaning
\[
\sup_{u\in \operatorname{proj}_{S_k}(\mathcal Y_x)}
\left|
f_\theta(u;c_k)-g_k(u;c_k)
\right|
\le
\varepsilon
\qquad
\text{for all } k=1,\dots,K.
\]
If
\[
2\varepsilon\sum_{k=1}^K w_k < \gamma,
\]
then every global minimizer of $E_\theta(x,\cdot)$ over $\mathcal Y_x$ is valid:
\[
\arg\min_{y\in \mathcal Y_x} E_\theta(x,y)
\subseteq
\mathcal V_x.
\]

Consequently, since $E_\theta(x,\cdot)$ is convex on $\mathcal Y_x$, any
optimization method that converges to a global minimizer of a convex objective
over $\mathcal Y_x$ recovers a valid relaxed solution.
\end{lemma}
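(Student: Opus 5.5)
The plan is a uniform sandwich argument: first transfer the factor-wise approximation bound to a bound on the composed energies, then compare the value of $E_\theta$ at an invalid point with its value at a certificate minimizer.

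\textbf{Step 1 (global approximation).} Since $w_k\ge 0$, for every $y\in\mathcal Y_x$ we have $y_{S_k}\in\operatorname{proj}_{S_k}(\mathcal Y_x)$. The triangle inequality then gives
\[
|E_\theta(x,y)-G_x(y)|\le \sum_{k=1}^K w_k\,|f_\theta(y_{S_k};c_k)-g_k(y_{S_k};c_k)|\le \varepsilon W,
\qquad W:=\sum_{k=1}^K w_k .
\]

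\textbf{Step 2 (existence of minimizers).} The minimizer sets must be nonempty, so that $\min$ makes sense and the conclusion is not vacuous. Convex functions are continuous on the relative interior but not necessarily at the boundary. I would therefore either assume that $f_\theta$ and $g_k$ are continuous, as is the case for ICNNs with softplus or ReLU activations, or note that the statement already presupposes $\min_z G_x(z)$ is attained. Compactness of $\mathcal Y_x$ then gives a minimizer $z^\star$ of $G_x$. By hypothesis $z^\star\in\mathcal V_x$.

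\textbf{Step 3 (core comparison).} Suppose $\hat y\in\arg\min_{y\in\mathcal Y_x} E_\theta(x,y)$ and, for contradiction, $\hat y\notin\mathcal V_x$. Chaining Step 1, the margin assumption, and Step 1 again gives
\[
E_\theta(x,\hat y)\ge G_x(\hat y)-\varepsilon W\ge G_x(z^\star)+\gamma-\varepsilon W\ge E_\theta(x,z^\star)+\gamma-2\varepsilon W > E_\theta(x,z^\star),
\]
where the last inequality uses $2\varepsilon W<\gamma$. This contradicts the minimality of $\hat y$. Hence every minimizer of $E_\theta(x,\cdot)$ lies in $\mathcal V_x$.

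\textbf{Step 4 (consequence clause).} $E_\theta(x,\cdot)$ is convex on $\mathcal Y_x$ because it is a nonnegative combination of functions that are convex in their scope variables, and each scope map $y\mapsto y_{S_k}$ is linear. Any method converging to a global minimizer therefore lands in $\arg\min E_\theta\subseteq\mathcal V_x$. If the method only converges in function value, one can add that, by continuity and compactness, the iterates' limit points are minimizers and hence valid. Convexity plays no role in the inclusion itself; it is used only to justify that first-order methods find global minimizers, for instance via Theorem~\ref{thm:main-first-order} and Corollary~\ref{cor:main-convergence}.

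The main obstacle is only the technical point in Step 2 about attainment of minima. The substantive inequality chain is immediate.
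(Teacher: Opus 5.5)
Your proof is correct and follows the paper's argument essentially step for step: the uniform bound $|E_\theta(x,y)-G_x(y)|\le\varepsilon W$ with $W=\sum_k w_k$, then the comparison $E_\theta(x,\hat y)\ge G_x(z^\star)+\gamma-\varepsilon W>G_x(z^\star)+\varepsilon W\ge E_\theta(x,z^\star)$ against a certificate minimizer $z^\star$, with convexity entering only in the closing remark. Your Step~2 on attainment is extra care the paper omits (the paper simply takes $y^\star\in\arg\min G_x$, which the statement's use of $\min$ already presupposes); it is harmless but not essential, since any near-minimizer $z$ with $G_x(z)<\inf_{\mathcal Y_x}G_x+\gamma-2\varepsilon W$ makes the same chain go through.
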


\begin{proof}
Because each factor $f_\theta(\cdot;c_k)$ is convex in its scope variable and
each weight satisfies $w_k\ge 0$, the composed energy
\[
E_\theta(x,y)
=
\sum_{k=1}^K w_k f_\theta(y_{S_k};c_k)
\]
is convex in $y$. Similarly, $G_x$ is convex because it is a nonnegative weighted
sum of convex functions.

Let
\[
\Delta
=
\varepsilon \sum_{k=1}^K w_k .
\]
By the uniform approximation assumption, for every $y\in \mathcal Y_x$,
\[
\left|
E_\theta(x,y)-G_x(y)
\right|
=
\left|
\sum_{k=1}^K w_k
\left(
f_\theta(y_{S_k};c_k)-g_k(y_{S_k};c_k)
\right)
\right|
\le
\sum_{k=1}^K w_k\varepsilon
=
\Delta .
\]

Let
\[
y^\star \in \arg\min_{y\in\mathcal Y_x} G_x(y).
\]
By the exactness assumption, $y^\star\in\mathcal V_x$. For this point,
\[
E_\theta(x,y^\star)
\le
G_x(y^\star)+\Delta .
\]

Now take any invalid point $y\in\mathcal Y_x\setminus\mathcal V_x$. By the
margin assumption,
\[
G_x(y)
\ge
G_x(y^\star)+\gamma .
\]
Using the approximation bound again,
\[
E_\theta(x,y)
\ge
G_x(y)-\Delta
\ge
G_x(y^\star)+\gamma-\Delta .
\]
Since $2\Delta<\gamma$, we have
\[
G_x(y^\star)+\gamma-\Delta
>
G_x(y^\star)+\Delta .
\]
Therefore,
\[
E_\theta(x,y)
>
G_x(y^\star)+\Delta
\ge
E_\theta(x,y^\star).
\]
Thus no invalid point can be a global minimizer of $E_\theta(x,\cdot)$ over
$\mathcal Y_x$. Hence
\[
\arg\min_{y\in \mathcal Y_x} E_\theta(x,y)
\subseteq
\mathcal V_x.
\]

Finally, since $E_\theta(x,\cdot)$ is convex and $\mathcal Y_x$ is convex, the
optimization problem
\[
\min_{y\in\mathcal Y_x} E_\theta(x,y)
\]
has no spurious local minima. Therefore, any optimization method that converges
to a global minimizer of this convex problem recovers a valid relaxed solution.
\end{proof}

\begin{theorem}[First-order supervision suffices for convex energy minima]
\label{thm:first-order-supervision_a}
Let $\mathcal Y \subset \mathbb R^d$ be a nonempty compact convex relaxation of
the solution space, and let
\[
E_\theta(x,y)
\]
be an energy function that is convex and differentiable in $y$ for every fixed
instance $x$. Let $y^\star \in \mathcal Y$ be a correct solution for instance
$x$.

Suppose that training enforces the first-order optimality condition
\[
\left\langle
\nabla_y E_\theta(x,y^\star),\, y-y^\star
\right\rangle
\ge 0
\qquad
\text{for all } y\in\mathcal Y.
\]
Then $y^\star$ is a global minimizer of the energy:
\[
y^\star \in
\arg\min_{y\in\mathcal Y} E_\theta(x,y).
\]

Moreover, if $\mathcal V_x\subseteq \mathcal Y$ denotes the set of valid relaxed
solutions and there exists a margin $\gamma>0$ such that every invalid point
$y\in\mathcal Y\setminus\mathcal V_x$ satisfies
\[
E_\theta(x,y)
\ge
E_\theta(x,y^\star)+\gamma,
\]
then every global minimizer of $E_\theta(x,\cdot)$ over $\mathcal Y$ is valid:
\[
\arg\min_{y\in\mathcal Y} E_\theta(x,y)
\subseteq
\mathcal V_x.
\]

Thus, for convex energy models, an additional contrastive landscape-shaping loss
is not needed to make the correct solution a global minimizer. It is sufficient
to enforce the convex first-order optimality condition at the correct solution.
\end{theorem}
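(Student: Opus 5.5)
The first claim follows from the first-order characterization of convexity, as already sketched after Theorem~\ref{thm:main-first-order}. Since $E_\theta(x,\cdot)$ is convex and differentiable, for every $y\in\mathcal Y$ we have
\[
E_\theta(x,y)\ge E_\theta(x,y^\star)+\left\langle \nabla_y E_\theta(x,y^\star),\,y-y^\star\right\rangle .
\]
To keep the argument self-contained, I would derive this inequality from the definition of convexity. Convexity of $\mathcal Y$ guarantees that the segment $y^\star+t(y-y^\star)$, $t\in(0,1]$, stays in $\mathcal Y$. Along it,
\[
E_\theta(x,y^\star+t(y-y^\star))\le (1-t)E_\theta(x,y^\star)+tE_\theta(x,y).
\]
I would subtract $E_\theta(x,y^\star)$, divide by $t$, and let $t\downarrow 0$; differentiability identifies the limit of the left side as the directional derivative $\langle\nabla_y E_\theta(x,y^\star),y-y^\star\rangle$. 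Combining the displayed inequality with the assumed variational inequality gives $E_\theta(x,y)\ge E_\theta(x,y^\star)$ for all $y\in\mathcal Y$, so $y^\star\in\arg\min_{\mathcal Y}E_\theta(x,\cdot)$.

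For the second claim, I would take any $\hat y\in\arg\min_{\mathcal Y}E_\theta(x,\cdot)$. By the first part, $E_\theta(x,\hat y)=E_\theta(x,y^\star)$. Suppose $\hat y$ were invalid. The margin assumption would then give
\[
E_\theta(x,\hat y)\ge E_\theta(x,y^\star)+\gamma>E_\theta(x,y^\star),
\]
a contradiction. Hence $\hat y\in\mathcal V_x$. This mirrors the argument of Lemma~\ref{lem:convex-certificate-sufficiency_a} with $\varepsilon=0$ and $G_x=E_\theta(x,\cdot)$.

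One implicit point needs to be made explicit: the second claim requires $y^\star$ itself to be valid, i.e.\ $y^\star\in\mathcal V_x$. Otherwise the margin assumption applied to $y=y^\star$ would give $0\ge\gamma$, which is impossible, so the hypotheses would be inconsistent. I would note that "$y^\star$ is a correct solution" means precisely $y^\star\in\mathcal V_x$, so the hypotheses are coherent and the minimizer set is nonempty, containing at least $y^\star$. Compactness of $\mathcal Y$ is not needed once $y^\star$ is exhibited as a minimizer.

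The only mildly delicate step is justifying the directional-derivative limit at a boundary point $y^\star$ of $\mathcal Y$. I would handle it by assuming, as is standard, that $E_\theta(x,\cdot)$ is differentiable on an open set containing $\mathcal Y$. This holds for the PICNN energies of \eqref{eq:picnn-layer}, which are defined on all of $\mathbb R^d$.
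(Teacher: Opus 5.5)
Your proof is correct and follows the paper's own argument: the first-order convexity inequality combined with the assumed variational inequality gives $E_\theta(x,y)\ge E_\theta(x,y^\star)$ on $\mathcal Y$, and the margin condition rules out any invalid global minimizer by contradiction. The extras you add are harmless refinements that the paper leaves implicit: deriving the first-order inequality from the definition of convexity, assuming differentiability on an open set containing $\mathcal Y$, and observing that the margin hypothesis itself forces $y^\star\in\mathcal V_x$.
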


\begin{proof}
Fix an instance $x$. Since $E_\theta(x,\cdot)$ is convex and differentiable on
$\mathcal Y$, the first-order convexity inequality gives, for every
$y\in\mathcal Y$,
\[
E_\theta(x,y)
\ge
E_\theta(x,y^\star)
+
\left\langle
\nabla_y E_\theta(x,y^\star),\, y-y^\star
\right\rangle .
\]
By the assumed first-order optimality condition,
\[
\left\langle
\nabla_y E_\theta(x,y^\star),\, y-y^\star
\right\rangle
\ge 0
\qquad
\text{for all } y\in\mathcal Y.
\]
Therefore,
\[
E_\theta(x,y)
\ge
E_\theta(x,y^\star)
\qquad
\text{for all } y\in\mathcal Y.
\]
Hence $y^\star$ is a global minimizer:
\[
y^\star \in
\arg\min_{y\in\mathcal Y} E_\theta(x,y).
\]

Now assume the margin condition holds. Let
\[
\bar y \in \arg\min_{y\in\mathcal Y}E_\theta(x,y).
\]
If $\bar y$ were invalid, i.e. $\bar y\in\mathcal Y\setminus\mathcal V_x$, then
the margin condition would imply
\[
E_\theta(x,\bar y)
\ge
E_\theta(x,y^\star)+\gamma
>
E_\theta(x,y^\star).
\]
But this contradicts the fact that $\bar y$ is a global minimizer and
$y^\star\in\mathcal Y$. Therefore $\bar y$ must be valid. Since $\bar y$ was
arbitrary, we conclude that
\[
\arg\min_{y\in\mathcal Y}E_\theta(x,y)
\subseteq
\mathcal V_x.
\]
\end{proof}

\begin{corollary}[Approximate first-order supervision gives approximate optimality]
\label{cor:approx-first-order_a}
Under the assumptions of Theorem\label{thm:first-order-supervision_a}, suppose the
first-order condition is satisfied up to error $\varepsilon\ge 0$, meaning
\[
\left\langle
\nabla_y E_\theta(x,y^\star),\, y-y^\star
\right\rangle
\ge
-\varepsilon
\qquad
\text{for all } y\in\mathcal Y.
\]
Then $y^\star$ is $\varepsilon$-globally optimal:
\[
E_\theta(x,y^\star)
\le
\min_{y\in\mathcal Y}E_\theta(x,y)+\varepsilon.
\]
\end{corollary}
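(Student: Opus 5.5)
The plan is to repeat the argument of Theorem~\ref{thm:first-order-supervision_a}, carrying the slack $\varepsilon$ through each step. We fix the instance $x$ and an arbitrary $y\in\mathcal Y$. Because $E_\theta(x,\cdot)$ is convex and differentiable on the convex set $\mathcal Y$, and both $y^\star$ and $y$ lie in $\mathcal Y$, the first-order convexity inequality holds:
\[
E_\theta(x,y)\ \ge\ E_\theta(x,y^\star)+\left\langle \nabla_y E_\theta(x,y^\star),\,y-y^\star\right\rangle .
\]
This is the same inequality used in the proof of Theorem~\ref{thm:first-order-supervision_a}. If one wants it justified rather than quoted, restrict $E_\theta$ to the segment $t\mapsto y^\star+t(y-y^\star)$, $t\in[0,1]$, which stays in $\mathcal Y$. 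The restriction $\varphi(t)$ is convex, so $\varphi(1)\ge\varphi(0)+\varphi'(0^+)$, and $\varphi'(0^+)$ equals the directional derivative, which is the inner product above.

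Next, we substitute the approximate first-order condition, which says the inner product is at least $-\varepsilon$. This gives
\[
E_\theta(x,y)\ \ge\ E_\theta(x,y^\star)-\varepsilon \qquad\text{for every } y\in\mathcal Y .
\]

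Finally, we pass to the minimum over $y\in\mathcal Y$. Since $\mathcal Y$ is nonempty and compact and $E_\theta(x,\cdot)$ is continuous (it is differentiable), the minimum is attained, so the right-hand side $\min_{y\in\mathcal Y}E_\theta(x,y)$ is well defined. Even without attainment, taking the infimum gives the same bound. Rearranging yields $E_\theta(x,y^\star)\le\min_{y\in\mathcal Y}E_\theta(x,y)+\varepsilon$. Setting $\varepsilon=0$ recovers Theorem~\ref{thm:first-order-supervision_a}.

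There is no real obstacle in this argument. The only point needing care is that differentiability is assumed on $\mathcal Y$, which may be lower-dimensional (for example, the Birkhoff polytope). In that case the gradient inequality should be read through directional derivatives along feasible segments, as described above, and the argument goes through unchanged.
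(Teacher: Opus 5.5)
Your proof is correct and follows the paper's argument exactly: you apply the first-order convexity inequality at $y^\star$, substitute the $-\varepsilon$ lower bound on the inner product, and minimize over $y\in\mathcal Y$. Your extra remarks, justifying the gradient inequality along feasible segments and noting that the minimum is attained (or that the infimum gives the same bound), are sound refinements that the paper leaves implicit.
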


\begin{proof}
By convexity, for every $y\in\mathcal Y$,
\[
E_\theta(x,y)
\ge
E_\theta(x,y^\star)
+
\left\langle
\nabla_y E_\theta(x,y^\star),\, y-y^\star
\right\rangle .
\]
Using the approximate first-order condition,
\[
E_\theta(x,y)
\ge
E_\theta(x,y^\star)-\varepsilon
\qquad
\text{for all } y\in\mathcal Y.
\]
Equivalently,
\[
E_\theta(x,y^\star)
\le
E_\theta(x,y)+\varepsilon
\qquad
\text{for all } y\in\mathcal Y.
\]
Taking the minimum over $y\in\mathcal Y$ gives
\[
E_\theta(x,y^\star)
\le
\min_{y\in\mathcal Y}E_\theta(x,y)+\varepsilon.
\]
\end{proof}

\begin{corollary}[Convergence of projected convex inference]
\label{cor:projected-convergence_a}
Let $E_\theta(x,\cdot)$ be convex and $L$-smooth on a closed convex set
$\mathcal Y$. Let $y^\star\in\arg\min_{y\in\mathcal Y}E_\theta(x,y)$, and run
projected gradient descent with stepsize $\eta=1/L$:
\[
y_{t+1}
=
\Pi_{\mathcal Y}
\left(
y_t-\frac{1}{L}\nabla_y E_\theta(x,y_t)
\right).
\]
Then after $T$ iterations,
\[
E_\theta(x,y_T)-E_\theta(x,y^\star)
\le
\frac{L\|y_0-y^\star\|_2^2}{2T}.
\]

If, in addition, $E_\theta(x,\cdot)$ is $\mu$-strongly convex on $\mathcal Y$,
then projected gradient descent converges linearly:
\[
\|y_T-y^\star\|_2^2
\le
\left(1-\frac{\mu}{L}\right)^T
\|y_0-y^\star\|_2^2 .
\]
\end{corollary}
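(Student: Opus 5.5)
The plan is to run the standard projected-gradient analysis, using three ingredients: the descent lemma for $L$-smooth functions, the variational characterization of the projection, and the first-order convexity inequality already used in Theorem~\ref{thm:first-order-supervision_a}. Write $E=E_\theta(x,\cdot)$, $g_t=\nabla E(y_t)$, and introduce the gradient mapping $G_t = L(y_t-y_{t+1})$, so that $y_{t+1}=y_t-\tfrac1L G_t$.

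\textbf{Key one-step inequality.} The projection property $\langle y_t-\tfrac1L g_t - y_{t+1},\, z-y_{t+1}\rangle\le 0$ for all $z\in\mathcal Y$ rewrites as $\langle G_t-g_t,\, z-y_{t+1}\rangle\le 0$. Next I combine two bounds. The first is $L$-smoothness, $E(y_{t+1})\le E(y_t)+\langle g_t,y_{t+1}-y_t\rangle+\tfrac L2\|y_{t+1}-y_t\|^2$. The second is convexity, $E(y_t)\le E(z)+\langle g_t,y_t-z\rangle$. Together they give
\[
E(y_{t+1})\le E(z)+\langle G_t, y_t-z\rangle-\tfrac{1}{2L}\|G_t\|^2 \qquad \text{for all } z\in\mathcal Y.
\]

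\textbf{Sublinear rate.} Taking $z=y_t$ shows that $E(y_t)$ is monotonically nonincreasing. Taking $z=y^\star$ and completing the square gives
\[
E(y_{t+1})-E(y^\star)\le \tfrac L2\bigl(\|y_t-y^\star\|^2-\|y_{t+1}-y^\star\|^2\bigr).
\]
Summing over $t=0,\dots,T-1$ telescopes the right-hand side. By monotonicity, $T\,(E(y_T)-E(y^\star))$ is at most the sum of the left-hand sides, which yields the $L\|y_0-y^\star\|^2/(2T)$ bound.

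\textbf{Strongly convex case.} Here I redo the step above with the strengthened convexity inequality $E(y_t)\le E(z)+\langle g_t,y_t-z\rangle-\tfrac\mu2\|y_t-z\|^2$. With $z=y^\star$ this gives
\[
0\le E(y_{t+1})-E(y^\star)\le \tfrac L2\|y_t-y^\star\|^2-\tfrac L2\|y_{t+1}-y^\star\|^2-\tfrac\mu2\|y_t-y^\star\|^2.
\]
Rearranging yields $\|y_{t+1}-y^\star\|^2\le(1-\mu/L)\|y_t-y^\star\|^2$, and iterating $T$ times gives the linear rate.

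\textbf{Expected obstacle.} The only delicate step is deriving the one-step inequality with the projection; the difficulty is keeping the sign conventions straight when substituting $y_{t+1}-y_t=-G_t/L$. I would verify that step carefully. I would also note that smoothness and convexity must hold on a neighborhood containing the iterates, which is fine because all iterates lie in $\mathcal Y$ and the inequalities are applied only at points of $\mathcal Y$.
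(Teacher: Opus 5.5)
Your proof is correct. The paper gives no proof of this corollary to compare against: it is only restated in the appendix, so the authors are implicitly relying on the standard textbook result, and your gradient-mapping argument is that standard proof.

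The key steps check out:
\begin{itemize}
\item Combining the descent lemma with convexity gives $E(y_{t+1})\le E(z)+\langle g_t,y_{t+1}-z\rangle+\tfrac{1}{2L}\|G_t\|^2$.
\item The projection inequality lets you replace $g_t$ by $G_t$, which yields your one-step bound.
\item The identity $\langle G_t,y_t-y^\star\rangle-\tfrac{1}{2L}\|G_t\|^2=\tfrac{L}{2}\bigl(\|y_t-y^\star\|^2-\|y_{t+1}-y^\star\|^2\bigr)$ gives the telescoping bound for the sublinear rate.
\item In the strongly convex case, the extra $-\tfrac{\mu}{2}\|y_t-y^\star\|^2$ term together with $E(y_{t+1})\ge E(y^\star)$ gives the contraction factor $1-\mu/L$.
\end{itemize}

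One assumption should be stated explicitly. The $t=0$ step applies smoothness on the segment $[y_0,y_1]$ and convexity at $y_0$, so you need $y_0\in\mathcal Y$, or the hypotheses to hold on a set containing $y_0$. This matches the paper's use of feasible warm-starts.
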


\section{Ablations}
\label{app:extra}

\begin{figure*}[h!]
\centering

\begin{subfigure}{0.44\textwidth}
    \centering
    \includegraphics[width=\linewidth]{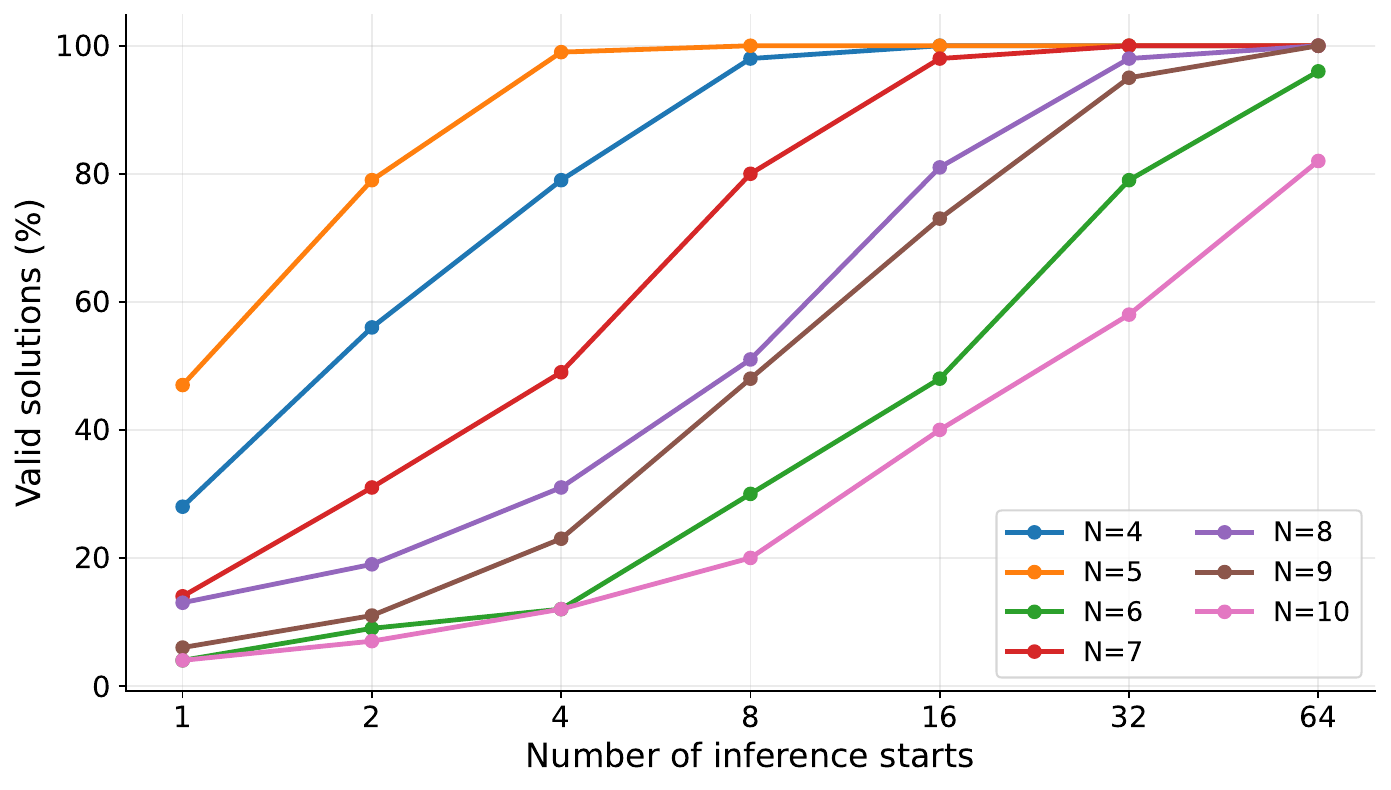}
    \caption{Valid solutions.}
\end{subfigure}
\begin{subfigure}{0.44\textwidth}
    \centering
    \includegraphics[width=\linewidth]{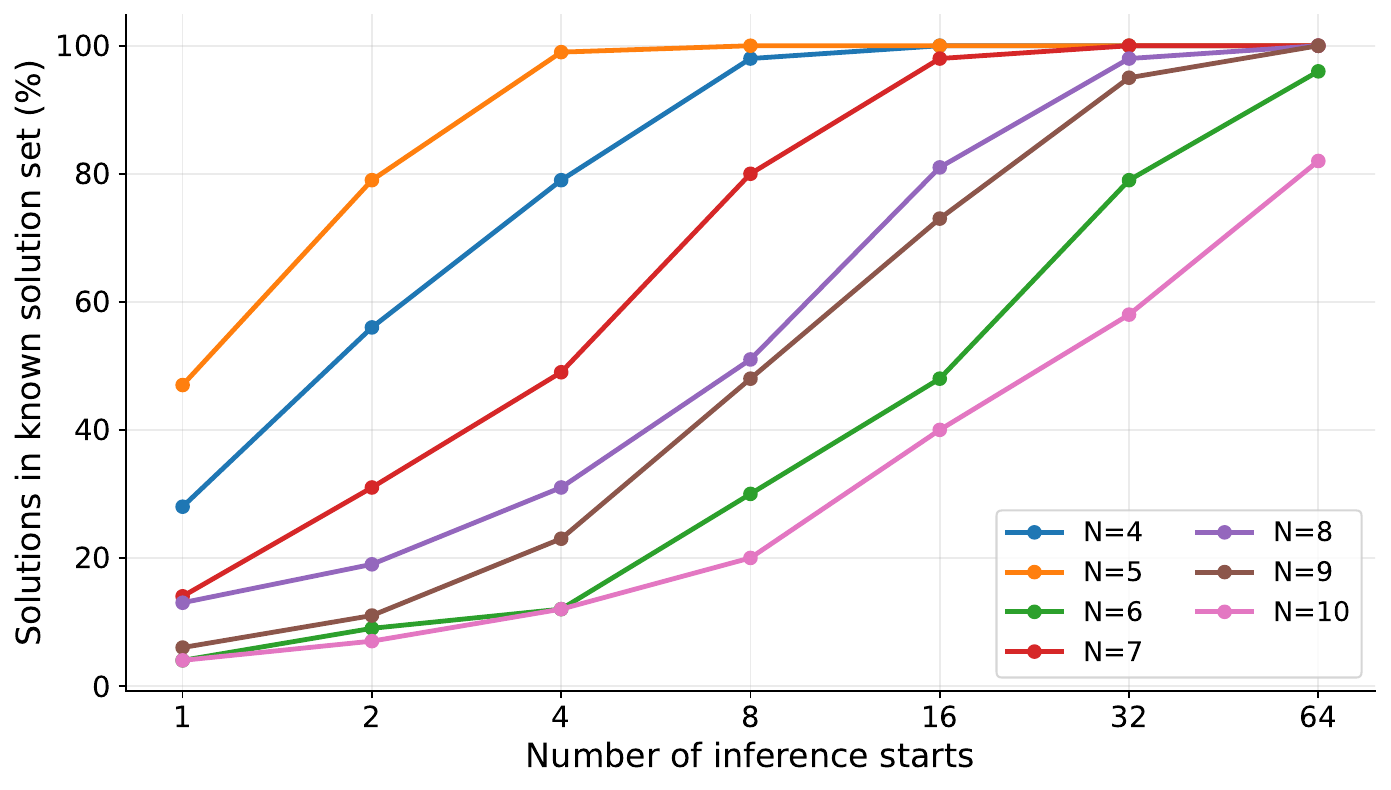}
    \caption{Known solution rate.}
\end{subfigure}

\vspace{0.6em}

\begin{subfigure}{0.44\textwidth}
    \centering
    \includegraphics[width=\linewidth]{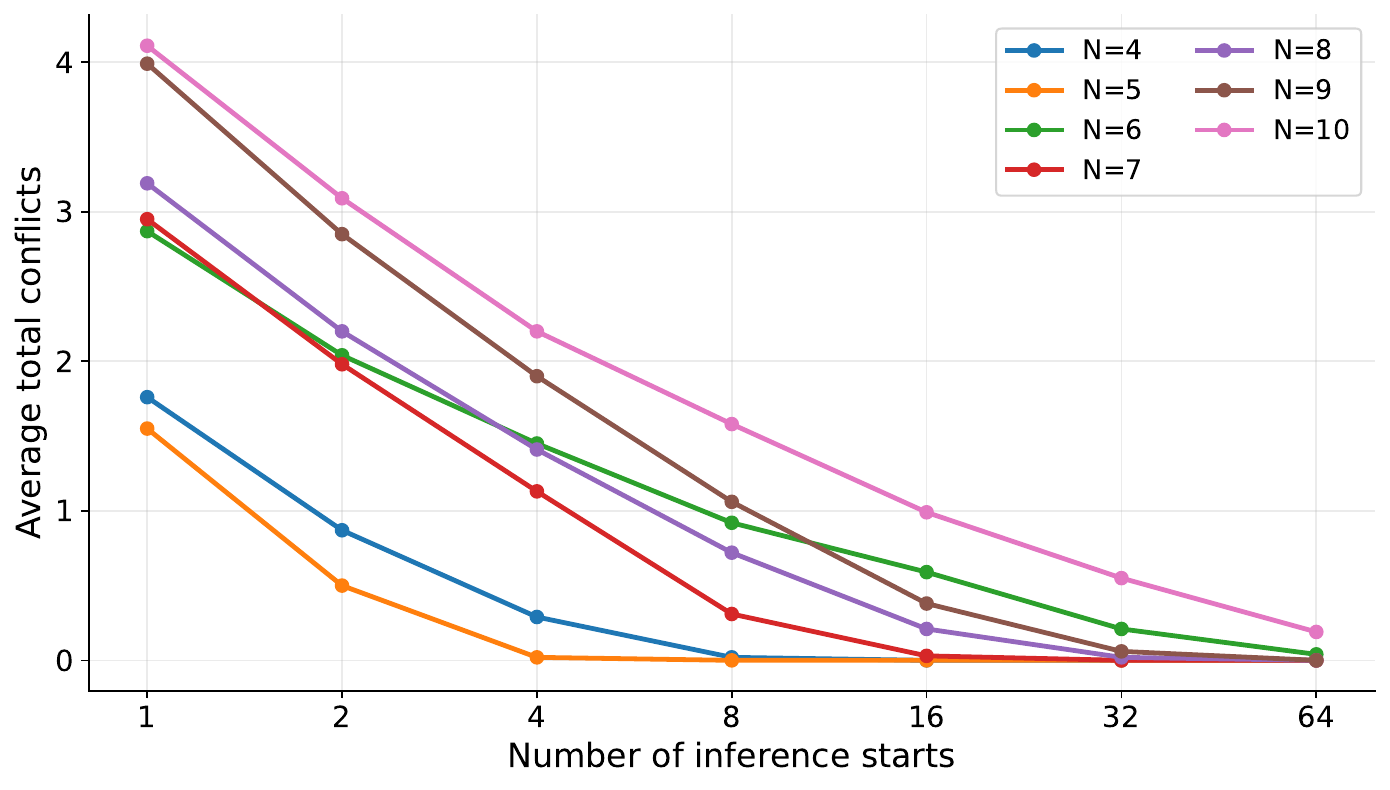}
    \caption{Average conflicts.}
\end{subfigure}
\begin{subfigure}{0.44\textwidth}
    \centering
    \includegraphics[width=\linewidth]{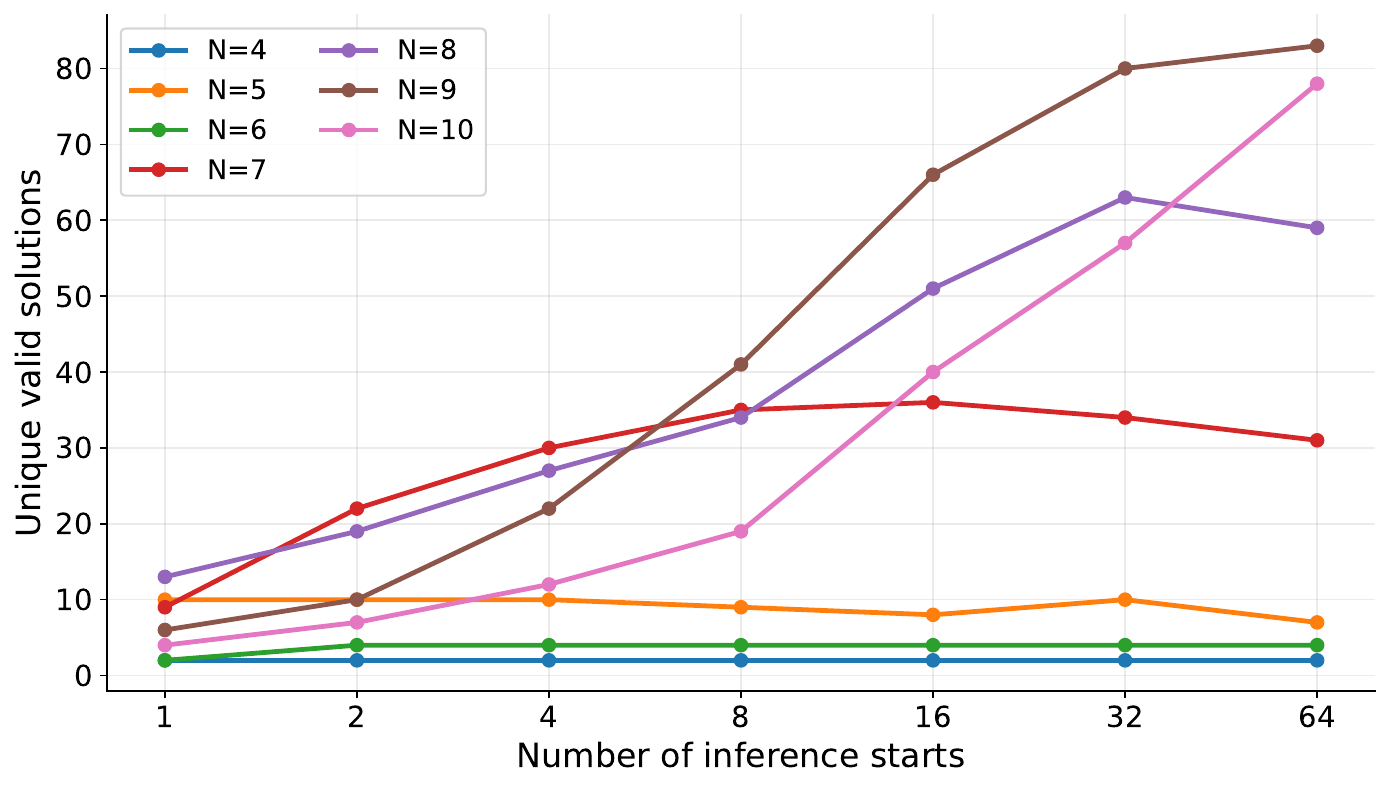}
    \caption{Unique valid solutions.}
\end{subfigure}

\caption{
Inference-start ablation for N-Queens. Increasing the number of independent
starts improves the probability of finding a valid decoded board, reduces the
average number of conflicts, and increases the diversity of valid solutions.
}
\label{fig:starts-ablation}
\end{figure*}

\begin{figure}[h!]
\centering

\begin{subfigure}{0.44\linewidth}
    \centering
    \includegraphics[width=\linewidth]{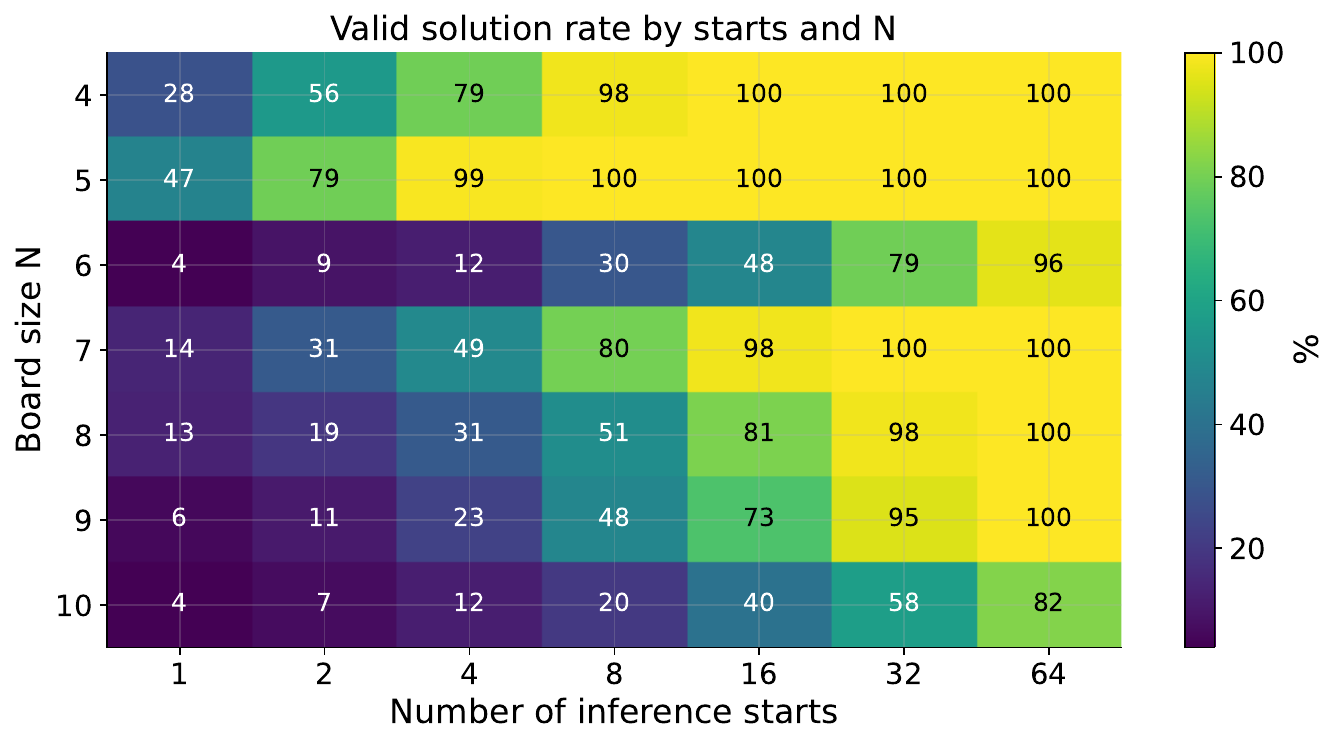}
    \caption{Valid solution rate.}
\end{subfigure}
\begin{subfigure}{0.44\linewidth}
    \centering
    \includegraphics[width=\linewidth]{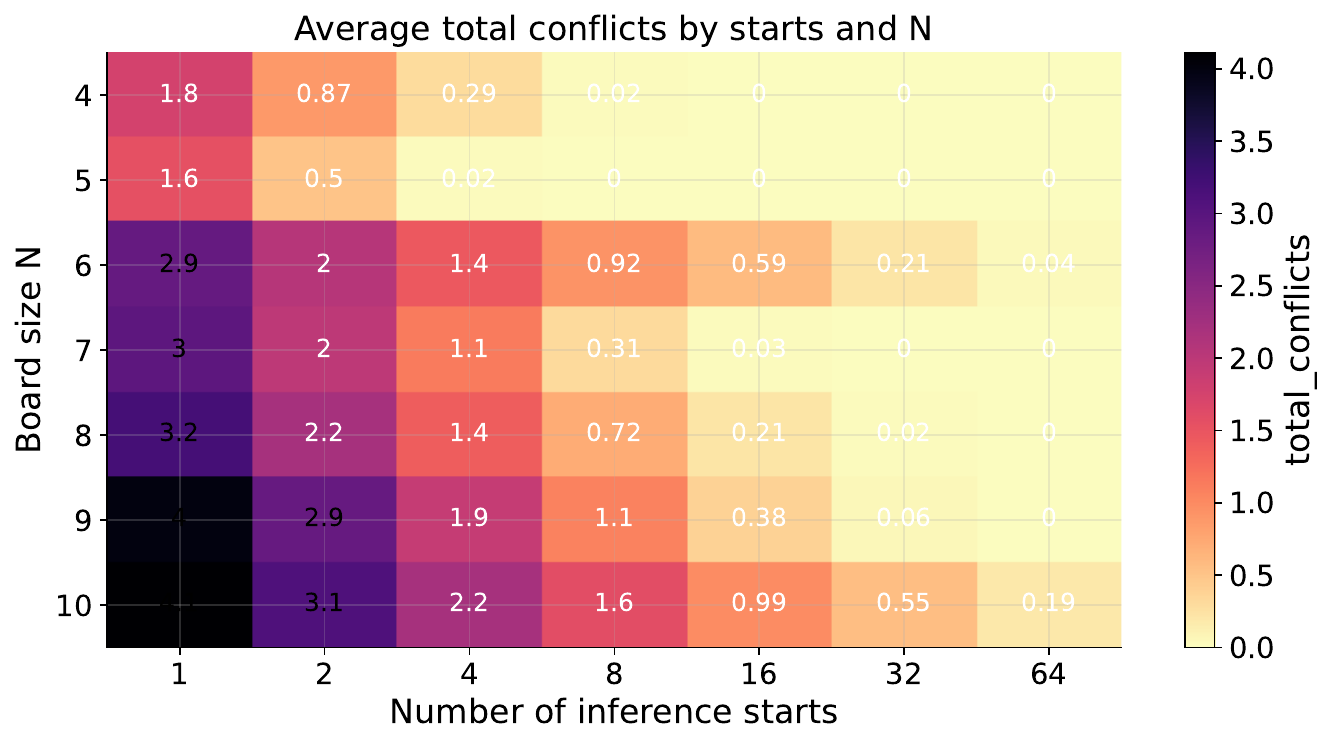}
    \caption{Average conflicts.}
\end{subfigure}

\caption{
Heatmap view of the inference-start ablation across board sizes. Larger boards
require more starts to obtain comparable validity, while increasing the number
of starts consistently reduces the average number of conflicts.
}
\label{fig:starts-ablation-heatmaps}
\end{figure}

\section{Computational Details}
All experiments were conducted on a single NVIDIA RTX A4000 GPU. We used a single-GPU setup for both training and evaluation; no multi-GPU or distributed training was used. The main model uses a three-layer PICNN factor network with hidden width 256, followed by composed-energy refinement through an unrolled projected solver. In the N-Queens experiments, Phase 1 factor pretraining was run for 1000 epochs with AdamW, and Phase 2 board-level refinement was run for 300 epochs with $T=140$ projected solver steps per unroll. During refinement, we used $S=4$ warm starts for training and $S=64$ starts for evaluation. 

Table~\ref{tab:speed} reports wall-clock runtimes for the N-Queens Phase 2 loss ablation on the same hardware. The timings show that adding the ranking and hard-negative terms introduces only a small overhead relative to the regression-only objective. The MSE-only configuration required 1:53:35, corresponding to 6.82 seconds per iteration, while the full objective with MSE, ranking, and hard-negative terms required 1:55:47, corresponding to 6.95 seconds per iteration. Thus, the full board-level objective increases per-iteration time by approximately $1.9\%$ compared with the MSE-only setting. The most expensive ablated configuration was the Rank+Hard objective without the MSE term, which took 2:00:07, or 7.21 seconds per iteration.

\begin{table}[t]
\centering
\caption{Runtime comparison across loss components for the N-Queens Phase 2 refinement experiment. Times are wall-clock durations on a single NVIDIA RTX A4000 GPU.}
\label{tab:speed}
\begin{tabular}{ccccc}
\toprule
MSE ($\alpha$) & Rank ($\beta$) & Hard ($\gamma$) & Time & Avg. sec/iter \\
\midrule
\(\checkmark\) &              &              & 1:53:35 & 6.82 \\
               & \(\checkmark\) & \(\checkmark\) & 2:00:07 & 7.21 \\
\(\checkmark\) & \(\checkmark\) &              & 1:54:26 & 6.87 \\
\(\checkmark\) & \(\checkmark\) & \(\checkmark\) & 1:55:47 & 6.95 \\
\bottomrule
\end{tabular}
\end{table}

The total wall-clock time for the four N-Queens ablation runs in Table~\ref{tab:speed} was approximately 7 hours and 44 minutes. These measurements are intended to characterize the cost of the proposed loss components under a fixed implementation and hardware setup, rather than to provide a fully hardware-independent benchmark. In practice, the dominant computational cost comes from backpropagating through the unrolled projected solver; the additional energy-margin and hard-negative terms require extra energy evaluations but do not change the asymptotic structure of the training loop. At inference time, compute scales linearly with the number of projected solver steps and the number of warm starts or particles used by the sampler.

\section{Additional experiments}
For Graph Coloring evaluation we used parameters from Table~\ref{tab:graph-coloring}.
\begin{table}[t]
\centering
\caption{Graph Coloring problem distribution parameters.}
\label{tab:graph-coloring}
\begin{tabular}{lcccc}
\toprule
Distribution & $V$ & $E$ & $d$ & $\chi$\\
\midrule
Erdos Renyi  & [20, 39] & [29, 76] & 0.12 & [3, 4] \\
Erdos Renyi 2 & [81, 99] & [193, 225] & 0.05 & [3, 4] \\
Holme Kim & [22, 34] & [56, 92] & 0.26 & [4, 4] \\
Holme Kim 2 & [86, 100] & [398, 469] & 0.10 & [5, 6]  \\
Regular Expander & [21, 40] & [63, 120] & 0.22 & [4, 4] \\
Regular Expander 2 & [86, 100] & [184, 200] & 0.23 & [3, 3] \\
Paley & [19, 37] & [171, 465] & 0.80 & [6, 10] \\
Complete & [8, 12] & [36, 66] & 1.00 & [8, 12]  \\
\bottomrule
\end{tabular}
\end{table}

Additionally, we have conducted study on 3-SAT problem. Each clause in the decomposition of the 3-SAT problem has 7 valid solutions and 1 invalid one. Using the convexity of the energy function, we want to train the network to find the single incorrect solution for each clause. If the formula is consistent, during inference we can find a solution that violates all clauses and take its negation. For training, we generated random 3-SAT instances with number of variables within [10, 20]. The number of clauses was set to be in phase transition, that is, it was set to be 4.258 × n, where n is the number of variables. We evaluate by sampling 100 instances with 20 variables and 91 clauses. We our CECM approach versus  GCN~\cite{kipf2016semi}, DIFUSCO~\cite{sun2023difusco}, FastT2T~\cite{li2024fast}, the seminal neural SAT solver NeuroSAT~\cite{selsam2018learning}, the state-of-the-art neural solver based on belief-propagation NSNet and EBM.

\begin{table}[t]
\centering
\caption{3-SAT evaluation (100 sampled formulas).}
\label{tab:sat-main}
\begin{tabular}{lll}
\toprule
\textbf{Model} & \textbf{Type} & \textbf{Satisfied Clauses} \\
\midrule
GCN & SL & $0.9617 \pm 0.0264$ \\
DGL & SL + TS & $0.9520 \pm 0.0330$ \\
DIFUSCO ($T=50$) & SL + S & $0.9734 \pm 0.0156$ \\
Fast T2T ($T_S=1, T_G=1$) & SL + S & $0.9749 \pm 0.0210$ \\
Fast T2T ($T_S=5, T_G=5$) & SL + S & $0.9760 \pm 0.0273$ \\
NeuroSAT ($T=50$) & SL + IR & $0.9661 \pm 0.0185$ \\
NeuroSAT ($T=500$) & SL + IR & $0.9742 \pm 0.0154$ \\
NSNet ($T=50$) & SL + BP & $0.9845 \pm 0.0272$ \\
NSNet ($T=500$) & SL + BP & $0.9856 \pm 0.0266$ \\
EBM ($P=1024$) & SL + PEM & $0.9985 \pm 0.0048$ \\
CCEM (Ours) & SL + ProjAdam & $0.9665 \pm 0.0075$\\
\bottomrule
\end{tabular}
\end{table}

\end{document}